\documentclass{article} 
\usepackage{iclr2023_conference,times}


\usepackage{hyperref}
\usepackage{url}


\usepackage{amssymb, amsmath, amsthm}

\definecolor{shadecolor}{gray}{0.9}

\usepackage{amsmath}
\usepackage{enumitem}
\usepackage{amssymb}
\newcommand{\mathbold}[1]{\ensuremath{\boldsymbol{\mathbf{#1}}}}
\DeclareRobustCommand{\mb}[1]{\mathbold{#1}}
\usepackage{amsthm}



\usepackage{graphicx}
\usepackage[labelfont=bf]{caption}
\usepackage[format=hang]{subcaption}

\usepackage{booktabs}


\usepackage[algoruled, algo2e]{algorithm2e}
\usepackage{algorithm}
\usepackage{algorithmic}

\usepackage{listings}
\usepackage{fancyvrb}

\usepackage{natbib}

\usepackage{xcolor}
\hypersetup{
    colorlinks,
    linkcolor={red!50!black},
    citecolor={blue!50!black},
    urlcolor={blue!80!black}
}

\usepackage[nameinlink]{cleveref}

\usepackage[acronym,smallcaps,nowarn]{glossaries}
\glsdisablehyper



\usepackage{nccmath}

\usepackage[theorems,skins]{tcolorbox}

\usepackage{wrapfig}

\usepackage{nicefrac}

\usepackage{multirow}
\usepackage{soul}
\usepackage{color}
\newacronym{mcmc}{mcmc}{Markov chain Monte Carlo}

\newacronym{ddpm}{ddpm}{denoising diffusion probabilistic model}
\newacronym{sde}{sde}{stochastic differential equation}
\newacronym{elbo}{elbo}{evidence lower bound}
\newacronym{nhl}{nhl}{Nose-Hoover Langevin}
\newacronym{kl}{kl}{Kullback-Leibler}

\newacronym{bpd}{bpd}{bits-per-dim}
\newacronym{nfe}{nfe}{number of function evaluations}

\newacronym{vae}{vae}{variational autoencoder}

\newacronym{amdt}{amdt}{Automatic Multivariate Diffusion Training}
\newacronym{ode}{ode}{ordinary differential equation}
\newacronym{dbgm}{dbgm}{diffusion-based generative model}

\newacronym{cld}{cld}{critically-damped langevin diffusion}
\newacronym{vpsde}{vpsde}{variance-preserving stochastic differential equation}
\newacronym{vesde}{vesde}{variance-exploding stochastic differential equation}
\newacronym{alda}{alda}{accelerated Langevin diffusion}
\newacronym{malda}{malda}{modified accelerated Langevin diffusion}

\newacronym{mle}{mle}{maximum likelihood estimation}

\newacronym{dsm}{dsm}{Denoising Score Matching}
\newacronym{ism}{ism}{Implicit Score Matching}

\newacronym{iwae}{iwae}{importance-weighted auto-encoder}

\newacronym{vp}{vp}{variance preserving}
\newacronym{ve}{ve}{variance exploding}

\newacronym{mdm}{mdm}{Multivariate Diffusion Model}

\newacronym{pfode}{pfode}{\textit{probability flow} \gls{ode} }


\newcommand{\var}[1]{\text{Var}\left(#1\right)}
\newcommand{\norm}[1]{\left\lVert#1\right\rVert}









\def\eqref#1{equation~\ref{#1}}









\def\1{\bm{1}}

\def\eps{{\epsilon}}

\newcommand{\mba}{\mathbold{a}}

\newcommand{\mbm}{\mathbold{m}}

\newcommand{\mbu}{\mathbold{u}}
\newcommand{\mbv}{\mathbold{v}}

\newcommand{\mbx}{\mathbold{x}}
\newcommand{\mby}{\mathbold{y}}
\newcommand{\mbz}{\mathbold{z}}

\newcommand{\mbA}{\mathbold{A}}
\newcommand{\mbB}{\mathbold{B}}
\newcommand{\mbC}{\mathbold{C}}
\newcommand{\mbD}{\mathbold{D}}

\newcommand{\mbH}{\mathbold{H}}
\newcommand{\mbI}{\mathbold{I}}

\newcommand{\mbL}{\mathbold{L}}
\newcommand{\mbM}{\mathbold{M}}
\newcommand{\mbN}{\mathbold{N}}

\newcommand{\mbQ}{\mathbold{Q}}
\newcommand{\mbR}{\mathbold{R}}
\newcommand{\mbS}{\mathbold{S}}

\newcommand{\mbV}{\mathbold{V}}

\newcommand{\mbZ}{\mathbold{Z}}




\def\rvm{{\mathbf{m}}}

\def\rvs{{\mathbf{s}}}

\def\rvu{{\mathbf{u}}}
\def\rvv{{\mathbf{v}}}

\def\rvx{{\mathbf{x}}}
\def\rvy{{\mathbf{y}}}
\def\rvz{{\mathbf{z}}}


\def\rmA{{\mathbf{A}}}
\def\rmB{{\mathbf{B}}}
\def\rmC{{\mathbf{C}}}
\def\rmD{{\mathbf{D}}}

\def\rmH{{\mathbf{H}}}
\def\rmI{{\mathbf{I}}}

\def\rmL{{\mathbf{L}}}

\def\rmQ{{\mathbf{Q}}}

\def\rmU{{\mathbf{U}}}

\def\rmZ{{\mathbf{Z}}}




\DeclareMathAlphabet{\mathsfit}{\encodingdefault}{\sfdefault}{m}{sl}
\SetMathAlphabet{\mathsfit}{bold}{\encodingdefault}{\sfdefault}{bx}{n}


\def\gN{{\mathcal{N}}}



\def\sR{{\mathbb{R}}}




\theoremstyle{plain}
\newtheorem{theorem}{Theorem}
\theoremstyle{definition}
\newtheorem*{assumption*}{Assumption}

\newtheorem*{proposition*}{Proposition}

\newcommand{\E}{\mathbb{E}}

\title{Where to Diffuse, How to Diffuse, and How to Get Back: Automated Learning for Multivariate Diffusions}


\author{Raghav Singhal$^{*,1}$,
Mark Goldstein$^{* 1}$,  Rajesh Ranganath$^{1,2}$
 \\
Courant Institute of Mathematical Sciences$^1$, New York University\\
Center for Data Science$^2$, New York University
}

%

\newcommand\blfootnote[1]{%
  \begingroup
  \renewcommand\thefootnote{}\footnote{#1}%
  \addtocounter{footnote}{-1}%
  \endgroup
}

\iclrfinalcopy 
\begin{document}

\maketitle

\begin{abstract}
\Glspl{dbgm} perturb data to a target noise distribution and reverse this process to generate samples. The choice
of noising process, or \textit{inference diffusion process}, affects both likelihoods and sample quality.
For example, extending the inference process with auxiliary variables leads to improved sample quality.
While there are many such multivariate diffusions to explore, each new one requires significant model-specific analysis, hindering rapid prototyping and evaluation. In this work, we study \glspl{mdm}. For any number of auxiliary variables, we provide a recipe for maximizing a lower-bound on the \glspl{mdm} likelihood without requiring any model-specific analysis. We then demonstrate how to parameterize the diffusion for a specified target noise distribution; these two points together enable optimizing the inference diffusion process. Optimizing the diffusion expands easy experimentation from just a few well-known processes to an automatic search over all linear diffusions. To demonstrate these ideas, we introduce two new specific diffusions as well as learn a diffusion process on the \textsc{mnist}, \textsc{cifar}10, and \textsc{imagenet}32 datasets. 
We show learned \glspl{mdm} match or surpass \glspl{bpd} relative to fixed choices of diffusions for a given dataset and model architecture.
\end{abstract}

\glsresetall
\section{Introduction}
\Glspl{dbgm} perturb data to a target noise distribution and reverse this 
process to generate samples. 
They have achieved impressive performance in image generation, editing, translation 
\citep{dhariwal2021diffusion, nichol2021improved, sasaki2021unit, ho2022cascaded},
conditional text-to-image tasks 
\citep{nichol2021glide, ramesh2022hierarchical, saharia2022photorealistic} and music and audio generation
\citep{chen2020wavegrad,kong2020diffwave,mittal2021symbolic}. They are often trained by maximizing a lower bound on the log likelihood, featuring an inference process interpreted as gradually ``noising" the data
\citep{sohl2015deep, ho2020denoising}.
\blfootnote{$^*$ Equal Contribution. Correspondence to \texttt{\{rsinghal,goldstein\} at nyu.edu}.}

The choice of this
inference process affects both likelihoods and sample quality.
On different datasets and models, different inference processes work better; there is no universal best choice of inference,
and the choice matters \citep{song2020score}.

While some work has improved performance by designing score model architectures \citep{ho2020denoising,kingma2021variational,dhariwal2021diffusion},  \citet{dockhorn2021score} instead introduce the \gls{cld}, showing that significant improvements in sample generation can be gained by carefully designing new processes. 
\gls{cld} pairs each data dimension with an auxiliary ``velocity" variable and diffuses them jointly using second-order Langevin dynamics.

A natural question: if introducing new diffusions results in dramatic performance gains, why are there only a handful of diffusions (\gls{vpsde}, \gls{ve}, \gls{cld}, sub-\gls{vpsde}) used in \glspl{dbgm}? For instance, are there other auxiliary variable diffusions that would lead to improvements like \gls{cld}? This avenue seems promising as auxiliary variables have improved other generative models and inferences, such as normalizing flows \citep{huang2020augmented}, neural \glspl{ode} \citep{dupont2019augmented}, hierarchical variational models \citep{ranganath2016hierarchical},
ladder variational autoencoder \citep{sonderby2016ladder}, among others.

Despite its success, \gls{cld} also provides evidence that each new process requires significant model-specific analysis.
Deriving the \gls{elbo} and training algorithm for diffusions is challenging 
\citep{huang2021variational,kingma2021variational,song2021maximum} and is carried out in a case-by-case manner for new diffusions \citep{campbell2022continuous}. Auxiliary variables seemingly complicate this process further; computing
conditionals of the inference process
necessitates solving matrix Lyupanov equations
(\cref{sec:transition}).
Deriving the inference stationary distribution---which helps the model and inference match---can be intractable. These challenges
limit rapid prototyping and evaluation of new 
inference processes.

Concretely, training a diffusion model requires:
\begin{enumerate}[start=1,label={(\bfseries R\arabic*):}]

\item Selecting an inference and model process pair
such that the inference process converges
to the model prior
\item Deriving the \gls{elbo} for this pair
\item Estimating the \gls{elbo} and its gradients
by deriving and computing the inference process'
transition kernel
\end{enumerate}

In this work, we introduce \glspl{mdm} and a method for training  and evaluating them.
\glspl{mdm} are
\acrlongpl{dbgm}
trained with auxiliary variables.
We provide a recipe
for training \glspl{mdm} beyond specific instantiations--like \gls{vpsde} and \gls{cld}---to all linear inference processes
that have a stationary distribution,
with any number of auxiliary variables.

First, we bring results from gradient-based MCMC \citep{ma2015complete} to diffusion modeling to 
construct \glspl{mdm} that converge to a chosen model prior (\textbf{R1}); this tightens the \gls{elbo}.
Secondly, for any number of auxiliary variables, we derive the \gls{mdm} \gls{elbo} (\textbf{R2}).
Finally,
we show that the transition kernel of linear \glspl{mdm},
necessary for the \gls{elbo}, can be computed automatically and generically, for higher-dimensional auxiliary systems (\textbf{R3}).

With these tools, we explore
a variety of 
new inference processes for \acrlongpl{dbgm}.
We then note that the automatic transitions and fixed stationary distributions facilitate directly learning the inference to maximize the \gls{mdm} \gls{elbo}. Learning turns diffusion model training into a search not only over score
models but also inference processes, at no extra derivational cost.

\paragraph{Methodological Contributions.} In summary, our methodological contributions are:
\begin{enumerate}

 \item Deriving \glspl{elbo} for training and evaluating multivariate diffusion models
(\glspl{mdm}) with auxiliary variables.

    \item Showing that the diffusion transition covariance does not need to be manually derived for each new diffusion. We instead demonstrate that a matrix factorization technique, previously unused in diffusion models, can automatically compute the covariance analytically for any linear \gls{mdm}.
    
    \item Using results from gradient-based \gls{mcmc}
    to construct \glspl{mdm} with a complete parameterization of inference processes whose stationary distribution matches the model prior.
    
    \item Combining the above into an algorithm
    called \gls{amdt}
    that enables training without diffusion-specific
    derivations. \gls{amdt} enables training score models for 
    any linear diffusion, including optimizing the diffusion and score jointly.
\end{enumerate}
To demonstrate these ideas, we develop \glspl{mdm} with two specific diffusions as well as learned multivariate diffusions. The specific diffusions are \gls{alda} (introduced in  \citet{mou2019high} as a higher-order scheme for gradient-based \gls{mcmc}) and an alteration, \gls{malda}. Previously, using these diffusions for generative modeling would require significant model-specific analysis. Instead, \gls{amdt} for these diffusions is derivation-free.

\paragraph{Empirical contributions.} 

We train \glspl{mdm} on the \textsc{mnist}, \textsc{imagenet}32 and \textsc{cifar}-10 datasets. In the experiments, we show that:
\begin{enumerate}

\item Training new and existing fixed diffusions, such as \gls{alda}
and \gls{malda}, is easy with the proposed algorithm \gls{amdt}.

\item Using \gls{amdt} to learn the choice of diffusion for the \gls{mdm}
matches or surpasses the performance of fixed choices of diffusion process; sometimes the learned diffusion and \gls{vpsde} do best; other times the learned diffusion and \gls{cld} do best.

\item  There are new and existing \glspl{mdm}, trained and evaluated with the \gls{mdm} \gls{elbo}, that account for as much performance improvement
over \gls{vpsde} as a three-fold increase in score model size for a fixed univariate diffusion.
\end{enumerate}

These findings affirm that the choice of diffusion affects the optimization problem, and that learning the choice bypasses the process of choosing diffusions for each new dataset and score architecture.
We additionally show the utility of the \gls{mdm} \gls{elbo} by showing on a dataset that \gls{cld} achieves better \glspl{bpd} than previously reported with the probability flow \textsc{ode} \citep{dockhorn2021score}.

\section{Setup}

We present diffusions by starting with the generative model
and then describing its likelihood lower bound
\citep{sohl2015deep,huang2021variational,kingma2021variational}. Diffusions sample from a model prior $\mbz_0 \sim \pi_\theta$
and then evolve a continuous-time stochastic process
$\mbz_t \in \mathbb{R}^d$:
\begin{align}
    \label{eq:model}
    d\mbz = h_\theta(\mbz,t)dt+\beta_\theta(t)d\mbB_t, \quad t \in [0, T]
\end{align}
where $\mbB_t$ is a $d$-dimensionsal Brownian motion. The model is trained so that $\mbz_T$ approximates the data $\mbx \sim q_{\text{data}}$.\footnote{Following \cite{huang2021variational,dockhorn2021score}
we integrate all processes in forward time $0$ to $T$. It may be helpful to think of an additional variable $\hat{\mbx}_{t} \triangleq \mbz_{T-t}$ so that $\hat{\mbx}_0$ approximates $\mbx \sim q_{\text{data}}$.} Maximum likelihood training of diffusion models is intractable
\citep{huang2021variational,song2021maximum,kingma2021variational}. Instead, 
they are trained using a variational lower bound on $\log p_{\theta}(\rvz_T=x)$. 
The bound requires an inference process $q_\phi(\mby_s|\mbx=x)$:\footnote{We use $\mby$ as the inference variable over the same space as the model's $\mbz$.}
\begin{align}
d\mby = f_\phi(\mby,s)ds + g_\phi(s)d\widehat{\mbB}_s, \quad s \in [0, T]
\end{align}
where $\widehat{\mbB}_s$ is another Brownian motion independent of $\mbB_t$.
The inference process is usually
taken to be specified rather than learned,
and chosen to be i.i.d.
for each $y_{tj}$ conditional on each
$x_j$. This leads to the interpretation of the $y_{tj}$ as noisy versions of features $x_{j}$ \citep{ho2020denoising}.
While the diffusion \gls{elbo} is challenging to derive in general, \cite{huang2021variational,song2021maximum}
show that when the model process takes the form:
\begin{align}\label{eq:model_parameterization}
    d\rvz = \left[g_\phi^2(T-t)s_\theta(\rvz,T-t)-f_\phi(\rvz,T-t)\right] dt + g_\phi(T - t) d\rmB_t ,
\end{align}
the \gls{elbo} is:
\begin{align}
    \label{eq:ism_bound}
\begin{split}
\log p_\theta(x)\geq \mathcal{L}^{\text{ism}}(x) =
\E_{q_\phi(\mby|x)} \Bigg[ \log\pi_\theta(\mby_T) +
\int_0^T -\frac{1}{2} \|s_\theta\|^2_{g^2_\phi} - \nabla \cdot (g_\phi^2 s_\theta - f_\phi) ds
\Bigg],
\end{split}
\end{align}
where  $f_\phi,g_\phi,s_\theta$ are evaluated at $(\mby_s,s)$, $\|\rvx\|^2_\rmA=\rvx^\top \rmA \rvx$ and $g^2=g g^\top$.  \Cref{eq:ism_bound} features the \gls{ism} loss \citep{song2020sliced},
and can be re-written as an \gls{elbo} 
$\mathcal{L}^{\text{dsm}}$ featuring \gls{dsm} \citep{vincent2011connection,song2020score}, see
\cref{appsec:ism_to_dsm}. 

\section{A recipe for Multivariate Diffusion Models \label{sec:mdm}}

As has been shown in prior work \citep{song2021maximum,dockhorn2021score}, the choice of diffusion matters. Drawing on principles from
previous
generative models (\cref{sec:related}), we
can consider a wide class
of diffusion inference processes
by constructing them using auxiliary variables.

At first glance, training such diffusions can seem challenging. First, one needs an \gls{elbo} that includes auxiliary variables. This \gls{elbo} will require 
sampling from the transition kernel,
and setting the model prior to the specified inference stationary distribution. But doing such diffusion-specific analysis manually
is challenging and hinders rapid prototyping. 

In this section we show how to address these challenges and introduce an algorithm, \gls{amdt},
to simplify and automate modeling with \glspl{mdm}. \gls{amdt} can
be used to train new and existing diffusions, including those with auxiliary variables, and including
those that learn the inference process. In \cref{appsec:score_matching} we discuss how the presented methods can also be used to automate and improve simplified score matching and noise prediction objectives used to train diffusion models.

\subsection{Multivariate Model and Inference \label{sec:multivar_model_and_inference}}
For the $j^{th}$ data coordinate at each time $t$,
\glspl{mdm} pair $\mbz_{tj} \in \mathbb{R}$ with a vector of auxiliary variables $\mbv_{tj} \in \mathbb{R}^{K-1}$ into a joint vector $\mbu_t$
and diffuse in the extended space:
  \begin{align}
        \label{eq:multivar_model}
        \mbu_0 \sim \pi_\theta,
        \quad \quad 
        d\mbu = h_\theta
        (
         \mbu_t = \begin{bmatrix}
      \mbz_t \\
      \mbv_t
    \end{bmatrix},
        t) 
        dt + \beta_\theta(t) d\mbB_t.
    \end{align}
\glspl{mdm} model the data $\mbx$ with $\mbz_T$, a coordinate in $\mbu_T \sim p_\theta$.
For the $j^{th}$ feature $\mbx_j$, each $\mbu_{tj} \in \mathbb{R}^K$  consists
of a ``data" dimension 
$\mbu_{tj}^z$ and auxiliary variable
$\mbu_{tj}^v$.
Therefore $\mbu \in \mathbb{R}^{dK}$. 
We extend
the drift coefficient $h_\theta$ from
a function in $\mathbb{R}^d \times \mathbb{R}_+ \rightarrow \mathbb{R}^d$ to
the extended space $\mathbb{R}^{dK} \times \mathbb{R}_+ \rightarrow \mathbb{R}^{dK}$. We likewise
extend the diffusion coefficient to a matrix $\beta_\theta$ acting on Brownian motion $\mbB_t \in \mathbb{R}^{dK}$. 

Because the \gls{mdm} model is over the extended space, the inference distribution $\mby$ must be too.
We then set $q(\mby_0^v|\mby_0^z=x)$ to
any chosen initial distribution,
e.g. $\mathcal{N}(\mathbold{0},\mbI)$
and discuss this choice in \cref{sec:insights}. Then
$\mby_s$ evolves according to 
the auxiliary variable inference process:
\begin{align}
\label{eq:multivar_inference}
 d\mby = f_\phi(\mby,s)ds + g_\phi(s) d \widehat{\mbB}_s,   
\end{align}
where the inference drift and diffusion coefficients $f_\phi, g_\phi$ are now over the extended space $\mby=[\mby^z,\mby^v]$. The function $f_\phi$
lets the $z$ and $v$ coordinates of $\mby_{tj}$ interact in the inference process.

\subsection*{Assumptions}

This work demonstrates how to parameterize 
time-varying It\^o processes, used for diffusion modeling, to have a stationary distribution that matches the given model prior. To take advantage of the automatic transition kernels also presented, the inferences considered for modeling are linear time-varying processes and take the form:
\begin{align*}
    d\rvy = \rmA_{\phi}(s) \rvy ds + g_\phi(s)d\rmB_s
\end{align*}
where $\rmA_\phi(s): \mathbb{R}_+ \rightarrow dK \times dK$ and $g_\phi(s): \mathbb{R}_+ \rightarrow dK \times dK$ are matrix-valued functions.

\subsection{\gls{elbo}  for \glspl{mdm} }
We now show how to train \glspl{mdm} to optimize a lower bound on the log likelihood of the data. 
Like in the univariate case, 
we use the parameterization in \cref{eq:model_parameterization} to obtain a tractable \gls{elbo}.
\begin{theorem}
The \gls{mdm} log marginal likelihood of the data is lower-bounded by:
\begin{align}
\begin{split}
    \label{eq:mdm_bound}
\log & p_\theta(x) \geq
\E_{
q_\phi(\mby|x)
}
    \Bigg[ 
    \underbrace{\log\pi_\theta(\mby_T)}_{\ell_T}
    -
\int_0^T
    \frac{1}{2} \|s_\theta\|^2_{g^2_\phi}
    +
    \nabla \cdot (g_\phi^2 s_\theta - f_\phi)
    ds
     -
    \underbrace{\log q_\phi(\mby_0^v|x)}_{\ell_q}
    \Bigg] 
    \quad (\mathcal{L}^{\text{mism}})
    \\
&=
\E_{
q_\phi(\mby|x)
}
    \Bigg[ 
    \ell_T +
\int_0^T
\frac{1}{2}\|
    s_{\phi}
\| ^2_{g^2_\phi}
    -  \frac{1}{2} \|
    s_\theta
    - 
    s_{\phi}
    \|
    ^2_{g^2_\phi} 
    +
        (\nabla \cdot f_\phi)ds
    -
   \ell_q
    \Bigg] \quad (\mathcal{L}^{\text{mdsm}}).
    \end{split}
\end{align} 
where divergences and gradients are taken with respect to
$\mby_s$ and $s_{\phi}=\nabla_{\mby_s} \log q_\phi(\mby_s|x)$.
\end{theorem}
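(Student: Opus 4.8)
The plan is to obtain $\mathcal{L}^{\text{mism}}$ by combining two ingredients: the univariate bound of \cref{eq:ism_bound} applied verbatim in the extended space, and a variational marginalization over the auxiliary coordinates. The key observation is that \cref{eq:multivar_model} is itself a diffusion on $\mathbb{R}^{dK}$ whose model process already has the reverse-process form of \cref{eq:model_parameterization} (here $f_\phi(\mby,s)=\rmA_\phi(s)\mby$ by the linearity assumption). Hence \cref{eq:ism_bound} applies directly to the full state $\mbu$ and lower-bounds the joint log-density $\log p_\theta(\mbu_T=(x,v))$ for every fixed auxiliary value $v$, with the integrand and the $\log\pi_\theta(\mby_T)$ term now interpreted over the extended space. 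No new path-space or Girsanov argument is needed; we reuse the one already established for \cref{eq:ism_bound}.

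To pass from the joint density to the data marginal I would write $p_\theta(x)=\int p_\theta(x,v)\,dv$, insert the chosen initial auxiliary distribution $q_\phi(\mby_0^v|x)$ as an importance weight, and apply Jensen's inequality:
\begin{align*}
\log p_\theta(x) \;\geq\; \E_{q_\phi(\mby_0^v|x)}\!\left[\log p_\theta(x,\mby_0^v) - \log q_\phi(\mby_0^v|x)\right].
\end{align*}
Substituting the extended-space bound for $\log p_\theta(x,\mby_0^v)$ and noting that drawing $\mby_0^v\sim q_\phi(\cdot|x)$ and then running the inference SDE is exactly sampling the full path $q_\phi(\mby|x)$, the two nested expectations collapse into $\E_{q_\phi(\mby|x)}[\cdot]$. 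The importance weight survives as the $-\ell_q=-\log q_\phi(\mby_0^v|x)$ term, yielding $\mathcal{L}^{\text{mism}}$.

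The equality $\mathcal{L}^{\text{mism}}=\mathcal{L}^{\text{mdsm}}$ is the multivariate analogue of the \gls{ism}-to-\gls{dsm} conversion of \cref{appsec:ism_to_dsm}. Since every integrand depends on the path only through the time-$s$ marginal $q_\phi(\mby_s|x)$, I would apply, at each $s$, the integration-by-parts (Stein) identity $\E_{q_\phi(\mby_s|x)}[\nabla\cdot(g_\phi^2 s_\theta)] = -\E_{q_\phi(\mby_s|x)}[(g_\phi^2 s_\theta)^\top s_q]$, valid because $s_q=\nabla_{\mby_s}\log q_\phi(\mby_s|x)$ and the boundary terms vanish under the Gaussian marginals of a linear process. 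Completing the square then turns $-\tfrac12\|s_\theta\|^2_{g^2_\phi}+(g_\phi^2 s_\theta)^\top s_q$ into $\tfrac12\|s_q\|^2_{g^2_\phi}-\tfrac12\|s_\theta-s_q\|^2_{g^2_\phi}$; the $\nabla\cdot f_\phi$ terms are untouched and match on both sides.

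The main obstacle I anticipate is not the algebra but the regularity bookkeeping: confirming that the linear, Gaussian structure guaranteed by the Assumptions makes the marginals $q_\phi(\mby_s|x)$ sufficiently integrable that (i) the Girsanov-type argument underlying \cref{eq:ism_bound} is legitimate in the extended space and (ii) the boundary terms in the integration by parts genuinely vanish. I would also want to check that interchanging $\int_0^T ds$ with the expectation is justified, so that the per-time Stein identity can be applied inside the time integral.
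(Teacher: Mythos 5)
Your proposal is correct and takes essentially the same route as the paper: the paper's proof likewise marginalizes over the auxiliary coordinate, inserts $q_\phi(\mby_0^v|x)$ as an importance weight, applies Jensen's inequality, and then invokes the univariate bound of \cref{eq:ism_bound} (Theorem 1 of \citet{huang2021variational}) on the extended state space, collapsing the nested expectations into $\E_{q_\phi(\mby|x)}[\cdot]$. The $\mathcal{L}^{\text{mdsm}}$ equality is also established exactly as you describe, via the expectation-by-parts (Stein) identity and completing the square, as in \cref{appsec:ism_to_dsm}.
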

\begin{proof}
The proof for the \gls{mdm} \gls{ism} \gls{elbo} $\mathcal{L}^{\text{mism}}$
 is in \cref{appsec:mdm_elbo}. In short, 
we introduce auxiliary variables, 
apply Theorem 1 of \cite{huang2021variational} (equivalently, Theorem 3 of \cite{song2021maximum} or appendix E of \cite{kingma2021variational})
to the joint space, and then apply
an additional variational bound to $\rvv_0$. The 
\gls{mdm} \gls{dsm} \gls{elbo} $\mathcal{L}^{\text{mdsm}}$
is likewise derived in 
 \cref{appsec:mdm_elbo}, similarly to
\cite{huang2021variational,song2021maximum}, but
extended to multivariate diffusions.
\end{proof}

We train \gls{mdm}'s by estimating the gradients of
$\mathcal{L}^{\text{mdsm}}$,
as estimates of $\mathcal{L}^{\text{mism}}$ can be computationally prohibitive. 
For numerical stability, the integral in \cref{eq:mdm_bound} is computed on $[\epsilon, T]$ rather than $[0, T]$.
One can regard this as a bound for a variable $\mbu_{\epsilon}$.
To maintain a proper likelihood bound for the data, one can choose a likelihood $\mbu_0 | \mbu_{\epsilon}$ and compose bounds as we demonstrate in \cref{appsec:offset_math}. We report the \gls{elbo} with this likelihood term, which plays the same role as the discretized Gaussian in \cite{nichol2021improved} and Tweedie's formula in \cite{song2021maximum}.

\subsection{Ingredient 1: Computing the transition  $q_\phi(\mby_s|x)$ \label{sec:transition}}
To estimate \cref{eq:mdm_bound} and its gradients,
we need samples from
$q(\mby_s|x)$ and to compute
$\nabla \log q(\mby_s|x)$.
While an intractable problem for \glspl{mdm} in general,
we provide two ingredients for tightening and optimizing these bounds in a generic fashion for linear inference \glspl{mdm}.

We first show how to automate computation of $q(\mby_s|\mby_0)$ and then $q(\mby_s|x)$. For linear \glspl{mdm} of the form: 
\begin{align*}
    d\rvy = \rmA(s) \rvy ds + g(s)d\rmB_s ,
\end{align*}
the transition kernel $q(\mby_s|\mby_0)$ is Gaussian \citep{sarkka2019applied}. Let $f(\mby,s)=\mbA(s)\mby$. Then, the mean and covariance
are solutions to the following \glspl{ode}:
\begin{align}
    d\mbm_{s|0}/ds &= \mbA(s)\mbm_{s|0} \nonumber \\ \label{eq:ode_cov}
    d \mathbold{\Sigma}_{s|0}/ds &= \mbA(s) \mathbold{\Sigma}_{s|0} + \mathbold{\Sigma}_{s|0} \mbA^\top(s) + g^2(s)  .
\end{align}
The mean can be solved analytically:
\begin{align}\label{eq:analytic_mean}
\mbm_{s|0}=\exp \left[\int_0^s \mbA(\nu) d\nu \right] \mby_0 \underbrace{=\exp(s \mbA)\mby_0}_{\text{no integration if $\mbA(\nu)=\mbA$}} .
\end{align}
The covariance equation does not have as simple a solution because \cref{eq:analytic_mean} as the unknown matrix $\mathbold{\Sigma}_{s|0}$ is being multiplied both from the left and the right. 

Instead of solving \cref{eq:ode_cov} for a specific diffusion \textit{manually}, as done in 
previous work (e.g. pages 50-54 of \cite{dockhorn2021score}),  we show that a matrix 
factorization technique (\citet{sarkka2019applied}, sec. $6.3$)
previously unused in \acrlongpl{dbgm} can automatically compute 
$\mathbold{\Sigma}_{s|0}$ generically for any linear \gls{mdm}.
Define $\mbC_s, {\mbH_s}$ that evolve according to:
\begin{align}
  \begin{pmatrix}
    d \mbC_s /ds \\
    d \mbH_s / ds
  \end{pmatrix} =
  \begin{pmatrix}
    \mbA(s) & g^2(s) \\
    \mathbold{0} & -\mbA^\top(s)
  \end{pmatrix}
        \begin{pmatrix}
          \mbC_s \\
          \mbH_s
        \end{pmatrix},
\end{align}
then $\mathbold{\Sigma}_{s|0}=\mbC_s{\mbH_s}^{-1}$ 
for $\mbC_0=\mathbold{\Sigma}_0$ and $\mbH_0=\mbI$
(\Cref{appsec:kernel}).
These equations can be solved in closed-form,
    \begin{align}
      \begin{pmatrix}
        \mbC_s \\
        \mbH_s
      \end{pmatrix} =
      \exp\Bigg[
      \begin{pmatrix}
        [\mbA]_s & [g^2]_s \\
        \mathbold{0} &  -[\mbA^\top]_s
      \end{pmatrix}
      \Bigg]
            \begin{pmatrix}
              \mathbold{\Sigma}_0 \\
              \mathbold{I} 
            \end{pmatrix} 
            \underbrace{=
    \exp\Bigg[ s
      \begin{pmatrix}
        \mbA & g^2 \\
        \mathbold{0} & -\mbA^{\top}
      \end{pmatrix}
      \Bigg]}_{\text{no integration if $\mbA(\nu)=\mbA,g(\nu)=g$}}
            \begin{pmatrix}
              \mathbold{\Sigma_0} \\
              \mathbold{I}
            \end{pmatrix},
    \end{align}
where $[\mbA]_s = \int_0^s \mbA(\nu)d\nu$. 
To condition on $\mby_0=(x,v)$, we set $\mathbold{\Sigma}_0=\mathbold{0}$. 

\paragraph{Computing $q_\phi(\mby_s|x)$.}

For the covariance $\mathbold{\Sigma}_{s|0}$, to condition on $x$ instead of $\mby_0$, we set $\mathbold{\Sigma_0}$ to 
\begin{align*}
    \mathbold{\Sigma_0} = \begin{pmatrix}
    0 & 0 \\
    0 & \mathbold{\Sigma_{v_0}}
    \end{pmatrix} ,
\end{align*}
To compute the mean, it is the same expression
as for $q(\mby_s|\mby_0)$, but with a different initial condition:
\begin{align}
\mbm_{s|0}=\exp \left[\int_0^s \mbA(\nu) d\nu \right] \begin{pmatrix}
x\\
\E_q[\mby^v_0|x]
\end{pmatrix}
\end{align}
See \cref{appsec:kernel} for more details.

\begin{wraptable}{r}{5.5cm}
\caption{\textbf{Runtime Comparison}: we compare the run time of sampling from the \gls{cld} diffusion analytically versus using the automated algorithm.}\label{wrap-tab:runtime}
    \begin{tabular}{lll}
        Method &  \textsc{cifar}-10  & \textsc{mnist} \\ \hline
        Analytical &  0.027 & 0.0062 \\
        Automated  & 0.029 & 0.007 \\
    \end{tabular}
\end{wraptable}

\paragraph{A fast and simple algorithm.}
We show in \cref{alg:transition_generic} (\cref{appsec:algo}) that computing the transition kernel only requires knowing $f,g$ and requires no diffusion-specific analysis.
For $K-1$ auxiliary variables, $\mbA , g$  are $K \times K$. Like for scalar diffusions, these parameters are shared across data coordinates. This means matrix exponentials and inverses are done on $K \times K$ matrices, where $K$ is only $2$ or $3$ in our experiments. In \cref{wrap-tab:runtime}, we compare the time to sample a batch of size $256$ from the transition kernel for \textsc{cifar10} and \textsc{mnist}. The table shows the extra computational cost of the automated algorithm is negligible. This automation likewise applies to simplified score matching 
and noise prediction objectives, since all rely on $q_\phi(\mby_s | x)$
(\cref{appsec:score_matching}).

\begin{algorithm}[t!]
\begin{algorithmic}
 \STATE {\bfseries Input:}
    Data $\{x_i\}$, inference process matrices $\rmQ_\phi, \rmD_\phi$, model prior $\pi_{\theta}$, initial distribution $q_{\phi}(\rvy^v_0 \mid x)$, and score model architecture $s_{\theta}$  
     \STATE {\bfseries Returns:} Trained score model $s_{\theta}$
    \WHILE{$s_\theta$ not converged}
        \STATE Sample $x \sim \sum_{i=1}^N \frac{1}{N} \delta_{x_i}$, $v_0 \sim q_{\phi}(\rvy^v_0 \mid x)$
        \STATE Sample $\rvs \sim \rmU[0, T]$ and $\rvy_s, \rvy_T \sim q_\phi(\rvy_s \mid x)$ using \cref{alg:transition_generic} 
        \STATE Estimate the stochastic gradient of the \gls{mdm} \gls{elbo}, $\nabla_\theta \mathcal{L}(\theta,\phi)$, using \cref{eq:mdm_bound}
        \STATE $\theta \leftarrow \theta + \alpha \nabla_{\theta}\mathcal{L}(\theta,\phi)$.
        \IF {learning inference}
            \STATE  $\phi \leftarrow \phi + \alpha \nabla_\phi \mathcal{L}(\theta, \phi)$
        \ENDIF
    \ENDWHILE
    \STATE{ \bfseries Output} $s_{\theta}$
\end{algorithmic}
\vskip -0.05in
\caption{\label{alg:amdt} Automatic Multivariate Diffusion Training}
\end{algorithm}

\subsection{Ingredient 2: \gls{mdm} Parameterization}

The \gls{mdm} \gls{elbo} (\cref{eq:mdm_bound}) is tighter when the inference $\mby_T$ tends toward the model's prior $\pi_\theta$. Here we construct inference processes with the model prior $\pi_\theta$ as a specified stationary distribution $q_{\infty}$. 

\cite{ma2015complete} provide a complete recipe for constructing gradient-based \textsc{mcmc} samplers; the recipe
constructs non-linear time-homogeneous It\^o processes with a given stationary distribution, and show that the parameterization spans all 
such It\^o processes with that stationary distribution.

Diffusion models usually have time-varying drift and diffusion coefficients (e.g. use of the $\beta(t)$ function).
To build diffusion models
that match the model prior, we
first extend Theorem 1 from \cite{ma2015complete}
to construct non-linear It\^o processes with time-varying drift and diffusion coefficients with a given stationary 
distribution (\Cref{appsec:stationary}). 
Then, to keep transitions tractable (per \Cref{sec:transition}), we specialize this result to linear It\^o diffusions. 

We directly state the result for linear time-varying diffusions with stationary distributions.
The parameterization requires a skew-symmetric matrix $-\mbQ(s)=\mbQ(s)^\top$, a positive semi-definite matrix $\mbD(s)$, and 
a function $\nabla H(\mby)$ such that the desired stationary distribution $q_{\infty}$ is proportional to $\exp[-H(\mby)]$.  Linear It\^o diffusions have Gaussian stationary distributions \citep{sarkka2019applied} meaning that $\nabla H$ is linear and can be expressed as $\mbS \mby$ for some matrix $\mbS$. 
For a matrix $\mbA$, let $\sqrt{\mbA}$ refer to the matrix square root defined by $\mba = \sqrt{\mbA} \iff \mbA = \mba \mba^\top$. Then,
the It\^o diffusion:
\begin{align}
\label{eq:stationary}
    d \mby  = 
    \underbrace{-\Big[\mbQ(s)+\mbD(s)\Big]
    \mbS \mby}_{f(\mby,s)} ds + \underbrace{\sqrt{2 \mbD(s)}}_{g(s)}d\widehat{\mbB}_s,
\end{align}
has Gaussian stationary distribution $\mathcal{N}(\mb0, \mbS^{-1})$ where $\mbQ(s),\mbD(s)$ and $\mbS$ are parameters. For a discussion of convergence to the stationary distribution, 
as well as skew-symmetric and positive semi-definite parameterizations, see \cref{appsec:stationary}, where we also show that existing
diffusion processes such as \gls{vpsde} and \gls{cld} are included
in $\mbQ/\mbD$ parameterization. We display the \gls{elbo} in terms of  $\mbQ/\mbD$
in \cref{appsec:stationary_elbo} and an algorithm in \cref{appsec:algo}.

For score matching and noise prediction losses
and a given $q_\phi$, 
achieving a minimizing value with respect to $s_\theta$ does not imply that
the generative model score will match the inference score. Modeling the data also requires the marginal distribution of $q_{\phi,T}$ to approximate $\pi$. When $q_\phi$ is constant, it is important to confirm the stationary distribution is appropriately set, and the tools used here for the \gls{elbo} can be used to satisfy this requirement for score matching and noise prediction
(\cref{appsec:score_matching}).

\subsection{Learning the inference process}

The choice of diffusion matters, and the \glspl{elbo} in \cref{eq:mdm_bound} have no requirement
for fixed $q_\phi$. We therefore
learn the inference process jointly with $s_\theta$. 
Under linear transitions (ingredient $1$), no algorithmic details
change as the diffusion changes during training. Under stationary
parameterization (ingredient $2$), we can learn without the stationary distribution going awry.
In the experiments, learning matches or surpasses \glspl{bpd} of 
fixed diffusions for a given dataset and score architecture.

In 
$\mathcal{L}^{\text{mdsm}}$
or 
$\mathcal{L}^{\text{mism}}$,
$q_{\phi,\infty}$ may be set to equal $\pi_\theta$, but
it is  $\rvy_T \sim q_{\phi,T}$ for the chosen $T$ that is featured
in the \gls{elbo}. Learning $q_\phi$ can 
choose $\mby_T$ to reduce the cross-entropy:
\begin{align}
    \label{eq:cross_ent}
    -\E_{q_\phi(\mby_T|x)}[\log \pi_\theta(\mby_T)].
\end{align}
Minimizing \cref{eq:cross_ent} will tighten the \gls{elbo} for any $s_\theta$. Next, $q_\phi$ is featured in the remaining terms that feature $s_\theta$; optimizing for $q_\phi$ will tighten and improve the \gls{elbo} alongside $s_\theta$. Finally, $q_\phi$ is featured in the expectations
and the $- \log q_\phi$ term:
\begin{align}
 \log p_\theta(\mbu_T^z=x)
  & \geq 
  = 
    \underbrace{\E_{q_\phi(\mby_0^v=v|x)}}
    \Big[ 
    (\mathcal{L}^{\text{dsm}} \text{ or } \mathcal{L}^{\text{ism}})
     \underbrace{- \log q_\phi(\mby_0^v=v|x)}
     \Big]
\end{align}

The $q_\phi(\mby_0^v|x)$ terms impose an optimality
condition that $p_\theta(\mbu_T^v|\mbu_T^z)=q_\phi(\mby_0^v|\mby_0^z)$ (\cref{appsec:change_of_measure}),
When it is satisfied, no looseness in the \gls{elbo} is due to the initial time zero auxiliary variables. 

To learn, $\mbQ, \mbD$ need to be specified with parameters $\phi$ that enable gradients. We keep $\mbS$ fixed at inverse covariance of $\pi_\theta$.
The transition kernel $q_\phi(\mby_s|x)$
depends on $\mbQ,\mbD$ through its mean and covariance.
Gaussian distributions permit gradient
estimation with  reparameterization or score-function gradients \citep{kingma2013auto,ranganath2014black,rezende2015variational,titsias2014doubly}. Reparameterization is accomplished via:
\begin{align}
\label{eq:reparm}
    \mby_s = \rvm_{s|0} + \rmL_{s|0}\epsilon
\end{align}
where $\epsilon \sim \gN(0, I_{dK})$ and $\rmL_{s|0}$ satisfies $\rmL_{s|0} \rmL_{s|0}^\top = \mathbf{\Sigma}_{s|0}$, derived using coordinate-wise Cholesky decomposition.
Gradients flow through \cref{eq:reparm}
from $\mby_s$ to $\mbm_{s|0}$ and $\boldsymbol{\Sigma}_{s|0}$ to $\mbQ,\mbD$ to parameters $\phi$. 

\Cref{alg:amdt} displays \glsreset{amdt}\gls{amdt}. \Gls{amdt} provides a training method for diffusion-based generative models for either fixed $\rmQ, \rmD$ matrices or for learning the $\rmQ_\phi, \rmD_\phi$ matrices, without requiring any diffusion-specific analysis.

\paragraph{Learning in other diffusion objectives.} Like in the \gls{elbo}, learning in score matching
or noise prediction objectives can improve the match between the inference process and implied generative model (\cref{appsec:score_matching}).

\section{Insights into  Multivariate Diffusions \label{sec:insights}} 
 
\paragraph{Scalar versus Multivariate Processes.}
\Cref{eq:stationary} clarifies what can change while preserving $q_{\infty}$. Recall that $\mbQ$ and $\mbD$ are $K \times K$ for $K-1$ auxiliary variables.
Because $0$ is the only $1 \times 1$ skew-symmetric matrix,
scalar processes must set $\mbQ=0$. 
With $q_{\phi,\infty}=\mathcal{N}(0,\mbI)$, the process is:
\begin{align}
    \label{eq:vp}
     d\mby = -\mbD(s)\mby ds + \sqrt{2\mbD(s)}d\widehat{\mbB}_s.
\end{align}
What is left is 
the \gls{vpsde} process used widely in diffusion models where 
$\mbD(s)=\frac{1}{2}\beta(s)$ 
is $1 \times 1$ \citep{song2020score}.
This reveals that the \gls{vpsde} process is the only scalar diffusion with a stationary distribution.\footnote{There are processes such as sub-\gls{vpsde} \citep{song2020score} which are covered in the sense that they tend
to members of this parameterization as $T$ grows: sub-VP converges to \gls{vpsde}.}
This also clarifies the role of $\mbQ$: it accounts for mixing between dimensions in multivariate processes, as do non-diagonal entries in $\mbD$ for $K > 1$.
\paragraph{\gls{cld} optimizes a
log-likelihood lower bound.}
Differentiating
$\mathcal{L}^\text{mdsm}$ (\cref{eq:mdm_bound})
with respect to the score model parameters,
we show that the objective for \gls{cld} \citep{dockhorn2021score}
maximizes a lower bound on $\log p_\theta(x)$, not just $\log p_\theta(\mbu_0)$, without appealing to the probability flow \textsc{ode}.

\paragraph{Does my model use auxiliary variables?}
An example initial distribution
is
$q(\mby_0^v|x)=\mathcal{N}(0,\mbI)$.
It is also common to set $\pi_\theta=\mathcal{N}(0,\mbI)$.
Because the optimum for diffusions
is $p_\theta=q$,
the optimal model has main and auxiliary dimensions independent at endpoints $0$ and $T$. Does this mean that the model does not use auxiliary variables?
In \cref{appsec:use_aux}, we show that in this case
the model can still use auxiliary variables at intermediate times. A sufficient condition is non-diagonal $\mbQ+\mbD$.

\section{Experiments}
We test the \gls{mdm} framework 
with handcrafted and learned diffusions. The handcrafted diffusions are (a) \Gls{alda}, used in \citep{mou2019high} 
 for accelerated gradient-based \gls{mcmc} sampling (\cref{eq:alda}) and (b) \Gls{malda}: a modified version of \gls{alda} (\cref{eq:malda}).
Both have two auxiliary variables. We also learn diffusions with $1$ and $2$ auxiliary variables. We compare with \gls{vpsde} and \gls{elbo}-trained \gls{cld}.

\begin{table}[th]
    \centering
    \caption{\Gls{bpd} upper-bounds on
    image generation for a fixed architecture. \textsc{cifar}-10: learning outperforms \gls{cld}, and both outperform the standard choice of \gls{vpsde}.  
     \textsc{mnist}: learning matches \gls{vpsde}
     while the fixed auxiliary diffusions are worse. 
     \textsc{imagenet}32: all perform similarly. 
     Learning matches or surpasses the best fixed diffusion, while bypassing the need to choose a diffusion.}
    \label{tab:inference_bpds}
    \begin{tabular}{lllll}
        Model & $K$ & \textsc{cifar}-10 & \textsc{imagenet}32   & \textsc{mnist}  \\ \hline
    \gls{vpsde} & 1  &  $3.20$ &  $3.70$  &    $1.26$  \\
    Learned  & 2 &  $3.07$&  $3.71$   &  $1.28$ \\
    Learned  & 3 &  $3.08$&  $3.72$   &  $1.33$\\
     \gls{cld} & 2 &  $3.11$ &  $3.70$  &   $1.35$ \\
    \gls{malda}  & 3   &  $3.13$ &  $3.72$  &  $1.65$ \\ 
     \gls{alda}  & 3  &  $29.43$  &
     $33.08$&  $124.60$ \\
     \hline
    \end{tabular}
\end{table}

\begin{table}[bh]
  \centering
      \caption{Parameter Efficiency.
      The first two rows display diffusions from previous work: \gls{vpsde} and \gls{cld}, both using score models with \textbf{108 million} parameters on \textsc{cifar}-10. We train
      the rest using
   a score model with \textbf{35.7 million} parameters.
  The learned diffusion matches the performance of \gls{vpsde}-large; changes in the inference can account for as much improvement as a 3x increase in score parameters.
  \glspl{bpd} are upper-bounds.
  } 
  \label{tab:inference_params}
  \begin{tabular}{llll}
      Model & $K$ & Parameters & \textsc{cifar}-10  \\ 
      \hline
  \gls{vpsde}-large \citep{song2021maximum} & 1  & 108M & $3.08$  \\ 
    \gls{cld}-large \citep{dockhorn2021score} & 2 & 108M &  $3.31$ \\
  \hline
  Learned  & 2 & 35.7M & $3.07$ \\
     \gls{cld} & 2 & 35.7M &  $3.11$  \\
  \gls{vpsde} & 1 & 35.7M  &  $3.20$  \\
   \hline
  \end{tabular}

\end{table}

Following prior work,
we train \glspl{dbgm}
for image generation.
We use the U-Net from \citet{ho2020denoising}. We input the auxiliary variables as extra channels, which only increases
the score model parameters in the input and output convolutions (\gls{cld} and Learned 2 have $7,000$ more parameters than \gls{vpsde} on \textsc{cifar}-10 and \textsc{imagenet}32 and only $865$ more for \textsc{mnist}). We use simple uniform dequantization.
 We report estimates of $\mathcal{L}^{\text{mdsm}}$
 (which reduces to the standard $\mathcal{L}^{\text{dsm}}$ for $K=1$).  We sample times using the importance sampling distribution from \cite{song2021maximum} with truncation set to $\epsilon=10^{-3}$. To ensure the truncated bound is proper, we use a likelihood described in \cref{appsec:offset_math}. 

\paragraph{Results.} \Cref{tab:inference_bpds} shows that the inference process matters and displays. It displays \glspl{dbgm} that we train and evaluate on \textsc{cifar}-10, \textsc{imagenet}32 and \textsc{mnist}. This includes the existing \gls{vpsde} and \gls{cld}, the new \gls{malda} and \gls{alda}, and the new learned inference processes. All are trained
with the 35.7M parameter architecture. For \textsc{cifar}-10, learning outperforms \gls{cld}, and both outperform the standard choice of \gls{vpsde}. For \textsc{mnist}, learned diffusions match \gls{vpsde} while the three fixed auxiliary diffusions are worse. On \textsc{imagenet}32, all perform similarly.
    The take-away is that learning matches or surpasses the best fixed diffusion performance and bypasses the choice of diffusion for each new dataset or score architecture.  
In \Cref{fig:samples} we plot the generated samples from \textsc{cifar10}.

\Cref{tab:inference_params}'s first two rows
display diffusion models from previous work: \gls{vpsde} \citep{song2021maximum} and \gls{cld}
\citep{dockhorn2021score} both with
the \textbf{108 million} score model from \cite{song2021maximum} (labeled ``large"). The rest are \glspl{dbgm} that we train using the U-Net with \textbf{35.7 million} parameters for \textsc{cifar}-10 and \textsc{imagenet}32 and 1.1 million for \textsc{mnist}. Despite using  significantly fewer parameters,
the learned diffusion achieves similar \gls{bpd} 
compared to the larger models, showing that changes in inference can account for as much improvement as a three-fold increase in parameters. While the larger architecture requires two GPUs for batch size 128 on \textsc{cifar-10} on A100s, the smaller one only requires one;
exploring inference processes can make
diffusions more computationally accessible.
\Cref{tab:inference_params} also demonstrates a tighter bound for \gls{cld} trained and evaluated with the \gls{mdm} \gls{elbo} ($\leq 3.11$) relative to existing probability flow-based evaluations ($3.31$). 

\vspace{-0.4cm}
\begin{figure}[ht]
    \small
    \centering
    \includegraphics[scale=0.25]{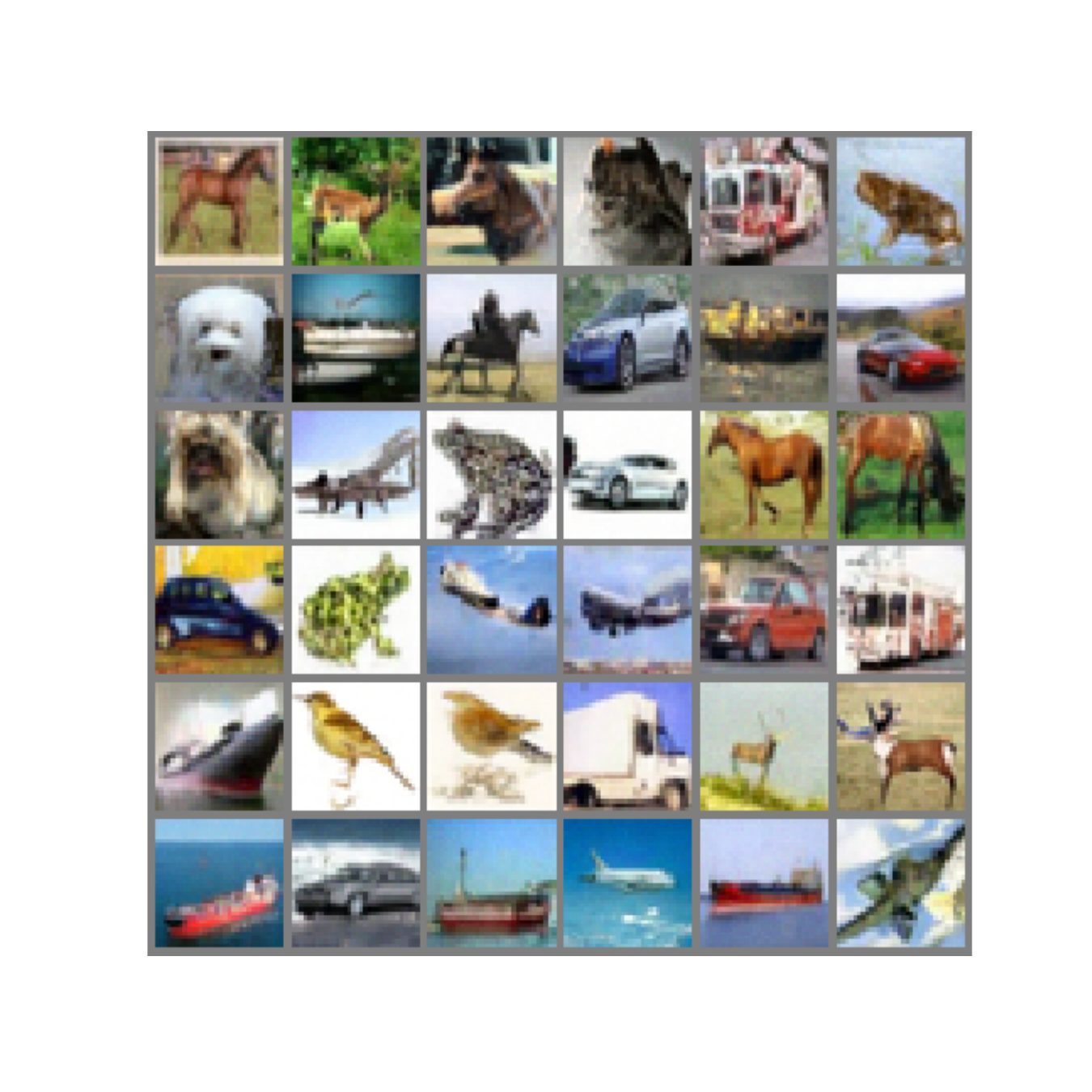}\includegraphics[scale=0.25]{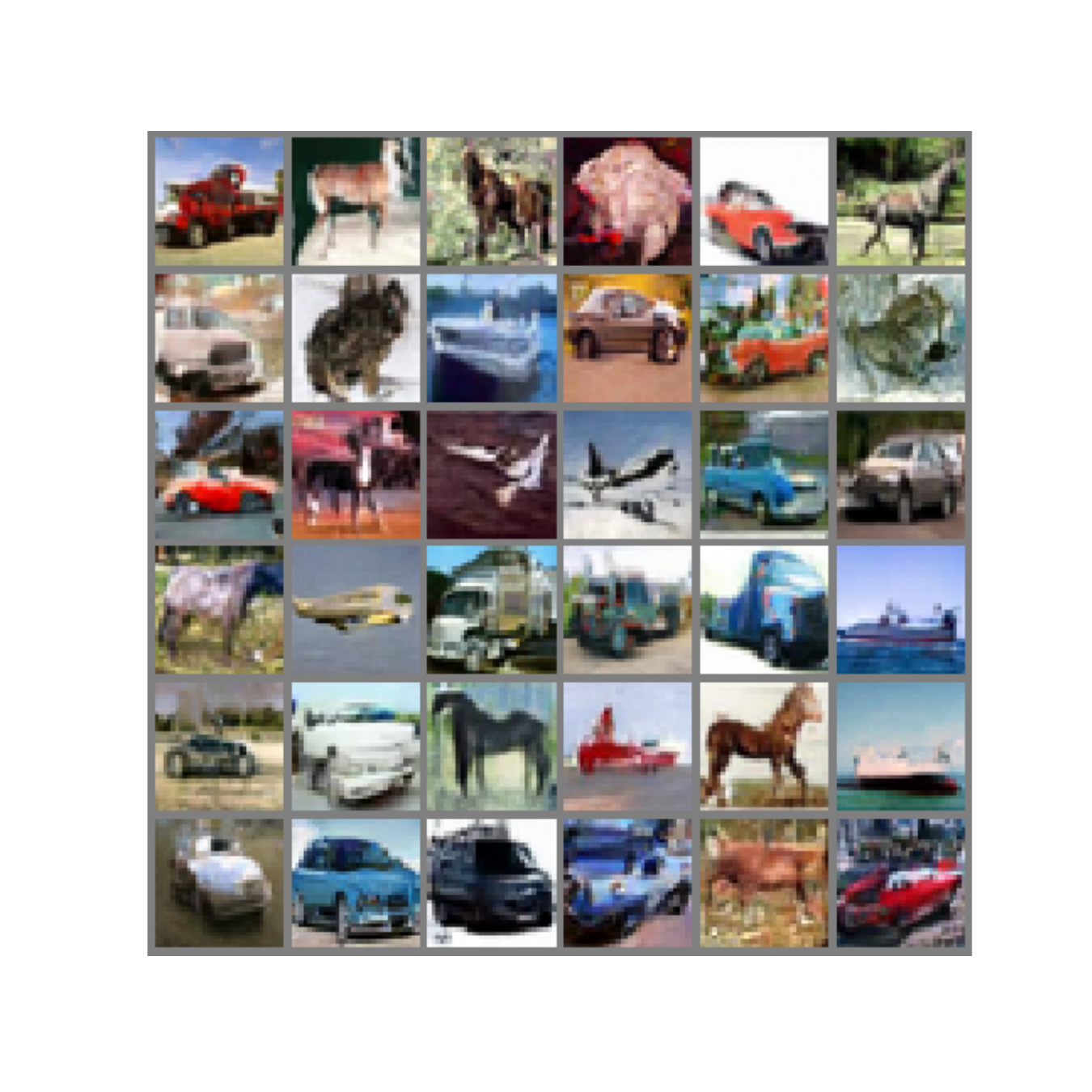}
    \vspace{-0.4cm}    
    \caption{\textsc{cifar}10 samples generated from 
    the ``learned 2" and \gls{malda} generative models.}
    \label{fig:samples}
\end{figure}

\vspace{-0.3cm}
\section{Related Work \label{sec:related}} 

\textbf{Evidence Lower Bounds.} \citet{song2021maximum,huang2021variational} derive the \gls{ism} and \gls{dsm} lower-bounds on the model log likelihood. Our work extends their analysis to the multivariate diffusion setting to derive lower bounds on the log marginal of the data in the presence of auxiliary variables.

\paragraph{Auxiliary variables.}
\citet{dupont2019augmented} shows that
augmented neural \glspl{ode} model
a richer set of functions and
\cite{huang2020augmented} uses this principle 
for normalizing flows. Hierarchical variational models and auto-encoders marginalize auxiliary variables to build expressive distributions \citep{ranganath2016hierarchical,sonderby2016ladder,maaloe2019biva,vahdat2020nvae,child2020very}. 
We apply this principle to \glspl{dbgm}, including
and extending \gls{cld}
 \citep{dockhorn2021score}.

\paragraph{Learning inference.}
Learning $q_\phi$ with $p_\theta$ is motivated in previous work \citep{kingma2013auto,sohl2015deep,kingma2021variational}.
\citet{kingma2021variational} learn the noise schedule for \gls{vpsde}. For \glspl{mdm}, there are parameters to learn beyond the noise schedule; 
$\mbQ$ can be non-zero, $\mbD$ can diagonal or full, give $\mbQ$ and $\mbD$ different time-varying functions, and learn $\nabla \mbH$.

\section{Discussion}
We present an algorithm 
for training multivariate diffusions with linear
time-varying inference processes with a specified stationary distribution
and any number of auxiliary variables.
This includes automating transition kernel computation
and providing a parameterization of diffusions that have a specified stationary distribution, which 
facilitate working with new diffusion processes, including learning the diffusion.
The experiments show that learning matches or surpasses the best fixed diffusion performance, bypassing the need to choose a diffusion. \glspl{mdm} achieve \glspl{bpd} similar to univariate diffusions, with as many as three times more score parameters. The proposed \gls{mdm} \gls{elbo} 
reports a tighter bound for the existing \gls{cld} relative to existing probability flow-based evaluations. This work enables future directions including interactions across data coordinates and using new stationary distributions.
\section{Acknowledgements}

This work was generously funded by NIH/NHLBI Award
R01HL148248, NSF Award 1922658 NRT-HDR: FUTURE
Foundations, Translation, and Responsibility for Data Science, and NSF CAREER Award 2145542.
The authors would additionally like to thank
Chin-Wei Huang for helpful discussing regarding \cite{huang2021variational}.

\bibliography{iclr2022_conference}
\bibliographystyle{iclr2022_conference}

\newpage 

\appendix 

\section{Automated Score Matching with Learned Inference \label{appsec:score_matching}}

Like for the \gls{mdm} \gls{elbo}, the methods in this work 
apply to training with the
score matching loss:
\begin{align*}
    \mathcal{L}_{\textsc{sm}}(x, \theta, \phi) = T \E_{t \sim U[0, T]} \E_{q_\phi(\rvy \mid x)} \left[ \lambda(t) 
    \norm{s_{\theta}(\rvy_t, t) - \nabla_{\rvy_t} \log q_\phi(\rvy_t \mid x)}_2^2 \right] ,
\end{align*}
where $\lambda: [0, T] \rightarrow \sR_{+}$ is a weighing function. The score-matching loss is often optimized in its simplified noise prediction form:
\begin{align*}
     \mathcal{L}_{\textsc{np}}(x, \theta, \phi) = T \E_{t \sim U[0, T]} \E_{q_\phi(\rvy \mid x)} \left[
    \norm{\eps_{\theta}(\rvy_t, t) - \eps}_2^2 \right]
\end{align*}
where $s_\theta = -\rmL_t^{-\top} \eps_\theta$ and $\rvy_t = \mu_t + \rmL_t \eps$ and $\epsilon$ is the noise used in sampling $\mby_t$. 
We describe here
how the improvements to the \gls{elbo}
studied in this work
carry over to 
$\mathcal{L}_{\textsc{SM}}$ and 
$\mathcal{L}_{\textsc{NP}}$. In the following
let $q_0$ be the data distribution, 
let $p_{(\theta,\phi),0}$ be the model's distribution of the data, 
and recall that the model is defined by 
$(s_\theta,f_\phi,g_\phi)$ and prior $\pi$ via a continuous-time stochastic process with drift coefficient $g_\phi^2 s_\theta - f_\phi$ and and diffusion coefficient $g_\phi$.

First, minimizing 
$\mathcal{L}_{\textsc{SM}}$ or
$\mathcal{L}_{\textsc{NP}}$
so that  $\nabla_{\mby_t} \log q_\phi(\mby_t) = s_\theta(\mby_t,t)$
does not alone imply that
$p_{(\theta,\phi),0}$ will equal $q_0$;
it must also be that $q_{\phi,T} \approx \pi$. Foregoing this requirement
means $\pi$ will produce samples that the generative model may not be able to push onto the path 
the model was trained on (formally, the score of the generative model would not equal the time-reversal of the forward score even if 
$s_\theta$ equals the forward score). This condition
can be satisfied if $q_\phi$ can be chosen with stationary distribution $\pi$. Section 3.4 describes how to accomplish this.

Next, for any fixed $q_\phi$, 
automatic transitions from section 3.3 streamline the computation of the score matching loss, allowing for simple score computation for a wide class of diffusions beyond VP.

Finally, for a fixed $q_\phi$ with 
$q_{\phi,T} \approx \pi$
and a score architecture $s_\theta$, 
minimizing 
$\mathcal{L}_{\textsc{SM}}$ or
$\mathcal{L}_{\textsc{NP}}$
w.r.t $\theta$ may be suboptimal. 
Optimization, like for the elbo, carries over to score matching and can 
close this gap; learning w.r.t. both $\theta,\phi$
increases the ability to successfully minimize the loss at each $t$ (section 3.5).
In other words, since the generative model is defined by $(s_\theta, f_\phi, g_\phi)$, learning $q_\phi$ means the loss trains all three components of the generative model rather than just one. In summary, score matching is automatic and can
learn over the space of linear diffusions that tend to the model prior.

\section{Does my model use auxiliary variables? \label{appsec:use_aux}}

In \cref{sec:mdm} we gave the example choice of 
$q(\mby_0^v|x)=\mathcal{N}(0,\mbI)$ coordinate-wise.
It is also a common choice to set $\pi_\theta=\mathcal{N}(0,\mbI)$.
Because the optimum in diffusion models
is $p_\theta=q$ for all $t$, we see
a peculiar phenomenon under this choice:
the model has main and auxiliary dimensions independent at both endpoints $0$ and $T$. Does this mean that the model does not use auxiliary variables?
We show that even when
$q_\phi(\mby_0)$ and $\pi_\theta$ have main and auxiliary variables independent, 
the model can use the auxiliary variables. A sufficient condition is $\mbQ+\mbD$ is non-diagonal.

To make this precise, we recall that we model with $p_\theta(\mbu_T^z=x)$. To show the model is using auxiliary variables, we just need to show that
    $\mbu_T^z$ (main coordinate at $T$) depends on $\mbu_t^v$ (aux. coordinate at $t$)
    for $T>t$. At optimum,
    $p_\theta(\mbu_T^z,\mbu_t^v)=q_\phi(\mby_0^z,\mby_{T-t}^v)$.
Therefore it is sufficient to show that for some time $s$,
    $q_\phi(\mby_s^v|\mby_0^z) \neq q_\phi(\mby_s^v)$.
    Because  $\mby_0^z$,
    is determined by $x$
    we need to show
    that $q_\phi(\mby_s^v|x)\neq q_\phi(\mby_s^v)$.
    To do that,
    we first derive
    $q(\mby_s|x)$ and then
    marginalize to get
    $q(\mby_s^v|x)$
    from it. Since the former
    is 2D Gaussian,
    the latter is available in terms
    of the former's mean and covariance.
    Suppose
     $\E[\mby_0^v]=0$,
     $\mbQ=[[0,-1],[1,0]]$ and $\mbD=[[1,0],[0,1]]$
     and we have $s=.1$ We have:
    \begin{align}
        \E[\mby_s|x]
        &=
        \exp 
        \Big[ 
        -s(\mbQ+\mbD)
        \Big] 
         \begin{pmatrix}
          x\\
          0
        \end{pmatrix}
    =\exp 
        \Big[ 
        \begin{bmatrix}
        -.1 & .1\\
        -.1 & -.1
        \end{bmatrix}
        \Big] 
         \begin{pmatrix}
          x\\
          0
        \end{pmatrix}
        =
        \begin{pmatrix}
          0.9003x\\
          -0.090x
        \end{pmatrix}
    \end{align}
    Regardless of the covariance
    any 1D of this 2D gaussian
    will have mean that is a function of $x$,
    meaning that $q(\mby_s^v|x)$
    does not equal
    $q(\mby_s^v)$ (which is also a 
    Gaussian but with mean depending
    on $\mbx's$ mean rather than $x$ itself.
    Therefore, 
    even under the setup with independent endpoints,
    the optimal model makes use of the intermediate auxiliary variables in its final modeling distribution $p_\theta(\mbu_T^z=x)$.
    
Are there choices of $\mbQ$ and $\mbD$
that lead to learning models that don't make use of the extra dimensions? 
As mentioned, in the inference process, $\mbQ$ is responsible for mixing information among the coordinates, and is the only source of this when $\mbD$ is diagonal. Then, if $\mbQ=\boldsymbol{0}$ and $\mbD$ is diagonal, none of the coordinates
for a given feature $\mbx_j$ (including
$\mbu_{tj}^z,\mbu_{tj}^{v_1},\ldots,\mbu_{tj}^{v_{K-1}}$) interact for any $t$. Then, since $p_\theta=q$ at optimum, independence of the coodinates at all $t$ in $q$ imply the same in $p_\theta$ and the model will not make use of any auxiliary variables when modeling the marginal $\log p_\theta(\mbu_T^z=x)$.

\section{Stationary parameterization \label{appsec:stationary}}
The non-linear time-homogeneous It\^o process family is:
\begin{align}
    d\mby 
    =
    f(\mby)dt
    +
    g(\mby) \mbB_t.
\end{align}
This family can be restricted to those with stationary distributions.
\cite{ma2015complete} show a complete recipe to span
the subset of this family with a desired stationary distribution.
Let $\mbQ$ be skew-symmetric ($-\mbQ=\mbQ^\top$) and $\mbD$ is positive semi-definite. Suppose the desired stationary distribution 
is $q_\infty(\mby)$. 
For a matrix $\mbA$, let $\sqrt{\mbA}$ refer to the matrix square root defined by $\mba = \sqrt{\mbA} \iff \mbA = \mba \mba^\top$. Then,
\cite{ma2015complete} show that,
setting
$\mbH(\mby)=-\log q_{\infty}(\mby)$, $g(\mby)=\sqrt{2\mbD(\mby)}$,
and
\begin{align}
f(\mby)=-[\mbD(\mby)+\mbQ(\mby)] \nabla \mbH(\mby) + \boldsymbol{\Gamma}(\mby), \quad \quad  \boldsymbol{\Gamma}_i(\mby) = \sum_{j=1}^d  \frac{\partial}{\partial \mbz_j} (\mbD_{ij}(\mby) + \mbQ_{ij}(\mby)),
\end{align}
yields a process $\mby_t$ with stationary distribution $q_{\infty}$. We extend it to time-varying (time in-homogeneous) processes.\\

\begin{theorem}
$q_\infty(\mby) \propto \exp[-H(\mby)]$ is a stationary distribution
of
\begin{align}
    d\mby 
    =
    \Bigg(
    -[\mbD(\mby,t)+\mbQ(\mby,t)] \nabla \mbH(\mby) + \boldsymbol{\Gamma}(\mby,t)
    \Bigg)dt
    +
   \sqrt{2\mbD(\mby,t)} \mbB_t,
\end{align}
for 
\begin{align}
\boldsymbol{\Gamma}_i(\mby,t) = \sum_{j=1}^d  \frac{\partial}{\partial \mby_j} (\mbD_{ij}(\mby,t) + \mbQ_{ij}(\mby,t)).
\end{align}
\end{theorem}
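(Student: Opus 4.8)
The plan is to verify stationarity directly through the Fokker--Planck (forward Kolmogorov) equation, and to reduce the time-inhomogeneous claim to the time-homogeneous recipe of \cite{ma2015complete} by freezing the time argument of the coefficients. First I would recall that for an It\^o process $d\mby = f(\mby,t)\,dt + \sqrt{2\mbD(\mby,t)}\,d\mbB_t$, the density $p(\mby,t)$ evolves by
\[
\partial_t p \;=\; \mathcal{L}_t^* p \;:=\; -\nabla \cdot \big(f(\mby,t)\, p\big) + \sum_{i,j}\partial_{y_i}\partial_{y_j}\big(\mbD_{ij}(\mby,t)\,p\big),
\]
where the second-order coefficient is $\mbD$ rather than $\frac{1}{2}\mbD$ because the diffusion matrix is $g g^\top = 2\mbD$. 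The crucial observation is that a \emph{time-independent} density $q_\infty$ is a stationary distribution of this process precisely when $\mathcal{L}_t^* q_\infty = 0$ for every $t$: since $q_\infty$ carries no explicit time dependence, $\partial_t q_\infty = 0$ automatically, so the only content of the stationarity requirement is that the spatial operator annihilate $q_\infty$ at each instant.

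The key step is then to freeze time. Fix $t = t_0$; the operator $\mathcal{L}_{t_0}^*$ is exactly the Fokker--Planck generator of a time-homogeneous diffusion with drift $f(\cdot,t_0) = -[\mbD(\cdot,t_0)+\mbQ(\cdot,t_0)]\nabla H + \boldsymbol{\Gamma}(\cdot,t_0)$ and diffusion matrix $\mbD(\cdot,t_0)$. By hypothesis $\mbD(\cdot,t_0)$ is positive semi-definite, $\mbQ(\cdot,t_0)$ is skew-symmetric, and $\boldsymbol{\Gamma}_i(\cdot,t_0) = \sum_j \partial_{y_j}(\mbD_{ij}+\mbQ_{ij})(\cdot,t_0)$ is exactly the correction term specified in the statement. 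These are precisely the hypotheses of Theorem 1 of \cite{ma2015complete} applied to the frozen coefficients, so their verification gives $\mathcal{L}_{t_0}^* q_\infty = 0$ for $q_\infty \propto \exp[-H(\mby)]$. Since $t_0$ was arbitrary, $\mathcal{L}_t^* q_\infty = 0$ for all $t$, and stationarity follows.

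The main obstacle, and really the only conceptual content, is recognizing that nothing in the algebraic verification of $\mathcal{L}^* q_\infty = 0$ in \cite{ma2015complete} uses time-homogeneity: that computation is carried out pointwise in the state $\mby$ with $\mbD,\mbQ$ treated as fixed matrices, and it works by writing the stationary flux in a form whose divergence vanishes---the symmetric ($\mbD$) part cancelling against $\nabla H$ and the antisymmetric ($\mbQ$) part being divergence-free by skew-symmetry. Hence the argument applies verbatim with an additional inert time argument. The theorem is thus the observation that, because the target stationary distribution does not depend on time, the time-inhomogeneous stationarity condition decouples into a family of time-homogeneous conditions, one per time slice, each settled by the existing recipe. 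The remaining care is to ensure the regularity/integrability conditions under which $\mathcal{L}_t^*$ and the flux-divergence manipulation are well defined hold at each $t$; these are inherited directly from the assumptions of the univariate construction and do not interact with the freezing argument.
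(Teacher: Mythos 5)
Your proof is correct, and it reaches the conclusion by a genuinely different route than the paper. The paper's proof (in \cref{appsec:stationary}) re-derives the algebra of \cite{ma2015complete} from scratch with the time argument carried along: it rewrites the Fokker--Planck right-hand side in the divergence form
\begin{align*}
\nabla \cdot
\Bigg(
\Big[ \mbD(\mby,t) + \mbQ(\mby,t) \Big]
\Big[ q(\mby,t) \nabla H(\mby) + \nabla q(\mby, t) \Big]
\Bigg)
\end{align*}
for time-dependent coefficients, using the product rule and the vanishing of $\sum_{i,j}\partial_i \partial_j [Q_{ijt}\, q]$ by skew-symmetry, and then observes that the bracketed flux vanishes at $q_\infty \propto \exp[-H(\mby)]$. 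You instead black-box the time-homogeneous theorem: since $q_\infty$ carries no time dependence, stationarity decouples into the family of conditions $\mathcal{L}_{t_0}^* q_\infty = 0$ indexed by $t_0$, and each one is exactly Theorem 1 of \cite{ma2015complete} applied to the frozen coefficients $\mbD(\cdot,t_0)$, $\mbQ(\cdot,t_0)$; your observation that freezing commutes with the spatial derivatives defining $\boldsymbol{\Gamma}$, so that $\boldsymbol{\Gamma}(\cdot,t_0)$ is precisely the correction term the homogeneous recipe prescribes, is the point that makes the reduction legitimate. Both arguments rest on the same insight---the stationarity condition is purely spatial, so the time dependence of the coefficients is inert---and both identify stationarity with pointwise vanishing of the Fokker--Planck right-hand side, leaving regularity and uniqueness of solutions aside. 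What yours buys is brevity and modularity: no algebra is repeated, provided the cited theorem covers state-dependent matrices (it does). What the paper's explicit computation buys is the divergence-form identity itself, which the paper reuses immediately after the proof: the uniqueness discussion (full-rank $\mbQ+\mbD$ forces the flux $q\nabla H + \nabla q$ to vanish) needs that rewritten form, not merely the conclusion of the theorem, so your shorter proof would have to recover it separately for that later argument.
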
 
\begin{proof} 
The Fokker Planck equation is:
\begin{align}
    \partial_t q(\mby, t)= - \sum_i \frac{\partial}{\partial \mby_i}\Big[f_i(\mby,t)q(\mby,t)\Big] + \sum_{i,j} \frac{\partial^2}{\partial \mby_i \partial \mby_j}\Big[\mbD_{ij}(\mby,t)q(\mby,t)\Big]
\end{align}
A stationary distribution is one where
the Fokker-Planck right hand side is equal to $0$.
To show that the stationary characterization also holds of time-inhomogenous processes with $\mbD(\mby,t)$ and $\mbQ(\mby, t)$,
we take two steps, closely following \cite{yin2006existence, shi2012relation, ma2015complete},
but noting that there is no requirement for $\mbQ,\mbD$ to be free of $t$. First, we show that the Fokker-Plack equation can be re-written as: 
\begin{align}
\partial_t q(\mby,t) = \nabla \cdot 
\Bigg( 
\Big[ \mbD(\mby,t) + \mbQ(\mby,t) \Big] 
\Big[ q(\mby,t) \nabla H(\mby) + \nabla q(\mby, t) \Big] 
\Bigg)
\end{align}
Second, because the whole expression is set to $0$ when 
the inside expression equals $0$
\begin{align}
   q(\mby,t) \nabla H(\mby) + \nabla q(\mby, t) = 0,
\end{align}
we just need to show that this holds when 
$q(\mby,t) =  \exp[-H(\mby)] / \rmZ$. The second step is concluded because 
\begin{align*}
    \Big[ q(\mby,t) \nabla H(\mby) + \nabla q(\mby, t)
    \Big] 
     &=
    \frac{1}{\rmZ} \Big[ 
    \exp[-H(\mby)] \nabla H(\mby) + \nabla \exp[-H(\mby)] 
    \Big] =0 ,
\end{align*}
where $\rmZ$ is the normalization constant of $\exp(-H(y))$.

It only remains to show that Fokker-Plack can be re-written in divergence form with time-dependent $\mbQ,\mbD$.
In the following let $Q_{ijt}$ denote $\mbQ_{ij}(\mby, t)$ and likewise for $D_{ijt}$.
Let $\partial_i$ denote $\frac{\partial}{\partial \mby_i}$ and let it denote $\frac{d}{d \mby_i}$ for scalar functions. We will use 
$[Ax]_i=\sum_j A_{ij}x_j$.
\begin{align*}
    \partial_t q_t 
         &= \nabla \cdot \Big( [\mbD(\mby,t) + \mbQ(\mby,t)][q \nabla H + \nabla q] \Big)\\
         &= \sum_i \partial_i \Big( \Big[[\mbD(\mby,t) + \mbQ(\mby,t)][q \nabla H + \nabla q]\Big]_i \Big)\\
         &= \sum_i \partial_i \sum_j [D_{ijt} + Q_{ijt}][q \nabla H + \nabla q]_j\\
         &= \sum_i \partial_i \sum_j [D_{ijt} + Q_{ijt}][q \partial_j H + \partial_j q]\\        
         &= \sum_i \partial_i \sum_j [D_{ijt} + Q_{ijt}][q \partial_j H]
            +  \sum_i \partial_i \sum_j [D_{ijt} + Q_{ijt}][\partial_j q]\\ 
             &= \sum_i \partial_i \sum_j [D_{ijt} + Q_{ijt}][q \partial_j H]
            +  \sum_i \partial_i \sum_j D_{ijt}[\partial_j q]
            + \sum_i \partial_i \sum_j Q_{ijt}[\partial_j q]  
\end{align*}
We re-write the 2nd and 3rd term. 
Holding $i$ fixed and noting $q$ is scalar, we get the product rule $\sum_j D_{ijt} (\partial_j q) = \sum_j \partial_j [D_{ijt} q] - q\sum_j \partial_j D_{ijt}$ for each $i$, and likewise for $q$:
\begin{align*}
 &\sum_i \partial_i \sum_j [D_{ijt} + Q_{ijt}][q \partial_j H]
        +  \sum_i \partial_i \sum_j D_{ijt}[\partial_j q]
            + \sum_i \partial_i \sum_j Q_{ijt}[\partial_j q]  \\
&= \sum_i \partial_i \sum_j [D_{ijt} + Q_{ijt}][q \partial_j H] 
        + 
        \sum_i \partial_i 
        \sum_j \partial_j [D_{ijt} q] - q\sum_j \partial_j D_{ijt}\\
    &\quad \quad  
        + 
        \sum_i \partial_i
            \sum_j \partial_j [Q_{ijt} q] - q\sum_j \partial_j Q_{ijt} 
\end{align*}
Because $\mbQ(\mby,t)$ is skew-symmetric, we have that 
$\sum_i \partial_i \sum_j \partial_j [Q_{ijt} q] = 0$, leaving
\begin{align*}
     \partial_t q_t 
&= \sum_i \partial_i 
\Bigg[\sum_j [D_{ijt} + Q_{ijt}][q \partial_j H] 
        \Bigg]
        +
        \sum_i \partial_i 
        \Bigg[
        \sum_j \partial_j [D_{ijt} q] - q\sum_j \partial_j D_{ijt} - q\sum_j \partial_j Q_{ijt}\Bigg] \\
&= \sum_i \partial_i 
\Bigg[\sum_j [D_{ijt} + Q_{ijt}][\partial_j H]  q
\Bigg]
        + 
        \sum_i \partial_i 
        \Bigg[ 
    \sum_j \partial_j [D_{ijt} q] - q \sum_j \partial_j (D_{ijt}+Q_{ijt}) 
        \Bigg]\\
&=
\sum_i \partial_i 
\Bigg[ 
\Big( 
\sum_j [D_{ijt} + Q_{ijt}][\partial_j H]
 - \sum_j \partial_j (D_{ijt}+Q_{ijt}) 
\Big)q
\Bigg]
+
\sum_i \sum_j \frac{\partial^2}{\mby_i \mby_j}
(D_{ijt} q)
\\
\end{align*}
Recalling that $f_i(\mby,t)=\Big(-[D+Q]\nabla H + \Gamma\Big)_i$ and again
that $[Ax]_i=\sum_j A_{ij}x_j$, we have equality with the original Fokker-Planck
\begin{align*}
&=
\sum_i \partial_i 
\Bigg[ 
\Big( 
\sum_j [D_{ijt} + Q_{ijt}][\partial_j H]
 - \sum_j \partial_j (D_{ijt}+Q_{ijt}) 
\Big)q
\Bigg]
+
\sum_{ij}  \frac{\partial^2}{\mby_i \mby_j}
(D_{ijt} q)
\\
&=
-\sum_i \frac{\partial}{\partial \mby_i} \Big[f_i(\mby,t)q(\mby,t) \Big]
+ 
\sum_{ij} \frac{\partial^2}{\mby_i \mby_j}\Big[\mbD_{ij}(\mby,t)q(\mby,t) \Big]\\
&= \partial_t q(\mby,t)
\end{align*}
\end{proof} 
We have shown $\exp[-H(\mby)]/\mbZ$
is a stationary distribution of the time-varying non-linear It\^o process:
\begin{align}
    d\mby 
    =
    \Bigg(
    -[\mbD(\mby,t)+\mbQ(\mby,t)] \nabla H(\mby) + \boldsymbol{\Gamma}(\mby,t)
    \Bigg)dt
    +
   \sqrt{2\mbD(\mby,t)} \mbB_t.
\end{align}
However, for some choices of $\mbQ,\mbD$, $\exp[-H(\mby)]/\mbZ$ is not necessarily the unique stationary
distribution. One problematic case can occur as follows. Suppose that 
row $i$ of $(\mbQ+\mbD)$ is all-zero; in this case, $d\mby_i=0$ which implies that $(\rvy_i)_t = (\rvy_i)_0$ for all $t>0$. Then, the initial 
distribution is also a stationary distribution. To rule out such pathological 
diffusions, we make the assumption that $\mbQ+\mbD$ is full rank.
Then, for uniqueness, recall that stationary distributions are the zeros of
\begin{align*}
\partial_t q(\mby,t) = \nabla \cdot 
\Bigg( 
\Big[ \mbD(\mby,t) + \mbQ(\mby,t) \Big] 
\Big[ q(\mby,t) \nabla H(\mby) + \nabla q(\mby, t) \Big] 
\Bigg)
\end{align*}
where the expression is of the form $\mbA \mbv$ for
$\mbA=\mbD(\mby,t) + \mbQ(\mby,t)$ and 
\begin{align*}
    \mbv=\Big[ q(\mby,t) \nabla H(\mby) + \nabla q(\mby, t) \Big].
\end{align*}
Under the assumption that $\mbQ+\mbD$ is full rank, the expression can only be zero when $\mbv$ is zero.
To show uniqueness under the full rank assumption, one must then show that
\begin{align*}
    \nabla q(\mby, t) = - q(\mby,t) \nabla H(\mby).   
\end{align*}
holds only if $q(\mby,t)=\exp[-H(\mby)]/\mbZ$.
Even if $\exp[-H(\mby)]/\mbZ$ is the unique stationary distribution, convergence to that distribution is a question. See \cite{zhang2013new} for more details.

Learning $\mbQ_\phi,\mbD_\phi$ in the \gls{mdm} \gls{elbo} helps push $\mby_T$ to the model prior $\pi_\theta$ and avoid issues like those discussed.

\subsection{Linear Processes}
Next, we specialize this general family to linear It\^o
processes to maintain tractable transition distributions.
A linear process is one where the drift $f(\mby,t)$ and diffusion
$g(\mby,t)$ are linear functions of $\mby$.
We express the drift function of a non-linear time-varying It\^o process with
stationary distribution proportional to $\exp[-H(\mby)]$ 
as 
\begin{align*}
    -(\mbQ(\mby,t)+\mbD(\mby,t))\nabla H(\mby) + \Gamma(\mby,t).
\end{align*}
Next, linear It\^o processes
have Gaussian stationary distributions \citep{sarkka2019applied} 
so $H(\mby)$ must be quadratic and $\nabla H(\mby)$ is linear, and neither are constant in $\mby$. Because $\nabla H(\mby)$ is linear,
it can be expressed as $\mbS \mby$ for some matrix $\mbS$ where
$\mbS$ is the inverse of the covariance matrix. Because $\nabla H$ is multiplied by $\mbQ,\mbD$, this means that $\mbQ,\mbD$ must be free of $\mby$. Recalling that $\Gamma$ is expressed as
a sum of derivatives w.r.t $\mby$ of $\mbQ + \mbD$, this means that $\Gamma$ must satisfy $\Gamma=0$. 
Next, because of the stationary requirement that $g(t)=\sqrt{2\mbD(\mby,t)}$, we can also
conclude by the restriction on $\mbD$ that the diffusion coefficient function must be independent of the state $\mby$.
Our final form 
for linear time-varying processes with stationary distributions
$\mathcal{N}(0,\mbS^{-1})$ is:
\begin{align}
    d \mby  = 
    \underbrace{-\Big[\mbQ(t)+\mbD(t)\Big]
    \mbS \mby}_{f(\mby,t)} dt + \underbrace{\sqrt{2 \mbD(t)}}_{g(t)}d\mbB_t
\end{align}

\subsection{Parameterizing $\mbQ_\phi$}
Suppose $b_q(s)$ is a positive scalar function defined on the time domain with known integral. Suppose
$\tilde{\mbQ}_\phi$ is any matrix. Then $\tilde{\mbQ}_\phi-\tilde{\mbQ}_\phi^\top$ is skew-symmetric
with $\tilde{\mbQ}_{\phi,ij}= - \tilde{\mbQ}_{\phi,ji}$. We can set $\mbQ_\phi$ to
\begin{align}
    \mbQ_\phi(s) = b_q(s) \cdot \Big[ \tilde{\mbQ}_\phi - \tilde{\mbQ}_\phi^\top \Big]
\end{align}
This is a general parameterization of time-independent skew-symmetric matrices, which have number of degrees of freedom equal to the number of entries in one of the triangles of the matrix, excluding the diagonal.

\subsection{Parameterizing $\mbD_\phi$}
Suppose $b_d(s)$ is a positive scalar function defined on the time domain with known integral.
Suppose $\tilde{\mbD}_\phi$ is any matrix. Then $\tilde{\mbD}_\phi \tilde{\mbD}_\phi^\top$
is positive semi-definite and spans all time-independent positive semi-definite matrices. We can set $\mbD_\phi$ to 
\begin{align}
    \mbD_\phi(s) = b_d(s) \cdot \Big[\tilde{\mbD}_\phi\tilde{\mbD}_\phi^\top \Big]
\end{align}
To show $\tilde{\mbD}\tilde{\mbD}^\top$ spans all positive semi-definite matrices: suppose $\mbM$ is positive semi-definite. Then it is square. Then it can be eigen-decomposed into $\mbM=\mbV \mathbold{\Sigma} \mbV^\top$ The degrees of freedom in
$\mbV \mathbold{\Sigma} \mbV^\top$ are just $\mbR=\mbV \sqrt{\mathbold{\Sigma}}$
since $\mbV \mathbold{\Sigma} \mbV^\top = \mbR \mbR^\top$
and the square root is taken element-wise because $\mathbold{\Sigma}$ is diagonal and is real because
each $\mathbold{\Sigma}_{ij} \geq 0$, which is true because $\mbM$ is positive semi-definite. Take $\mbD = \mbR$.

In our experiments we parameterize $\rmD$ as a diagonal-only matrix.

\subsection{Integrals}

The known integral requirement comes from the integrals required in the transition kernel, and can be relaxed two possible ways:
\begin{itemize}
    \item numerical integration of function with unknown integral. This is expected to have low error given that the function is scalar-in scalar-out.
    \item Directly parameterize the integral and use auto-grad when needing the functions not-integrated.
\end{itemize}
We stick with the known integrals. In conclusion, the underlying parameters are positive scalar functions $b_q(s),b_d(s)$ defined on the time domain and with known integral, and general matrices $\tilde{\mbQ}_\phi, \tilde{\mbD}_\phi$. 

\subsection{Instances}\label{appsec:instances}

\paragraph{\gls{vpsde}.}

\gls{vpsde} has $K=1$. Consequently, $\mbQ,\mbD$ are $K \times K$. The only $1 \times 1$ skew-symmetric matrix is $0$, so $\mbQ=0$. Setting $\mbD(t)=\frac{1}{2} \beta(t)$ recovers \gls{vpsde}:
\begin{align}
    d \mby = -\frac{\beta(t)}{2} \mby dt
        + \sqrt{\beta(t)}d \mbB_t
\end{align}
$\nabla H(\mby)=\mby$ so $\mbH(\mby) = \frac{1}{2}
\| \mby \|_2^2$. The stationary distribution is $\mathcal{N}(0,\mbI)$.

\paragraph{\gls{cld}.}
The \gls{cld} process (eq $5$ in \citet{dockhorn2021score}) is defined as  
\begin{align*}
    \begin{pmatrix}
      d\rvz_t \\
      d\rvv_r 
    \end{pmatrix} = d\rvy_t = 
    \begin{pmatrix}
      0 & \frac{\beta}{M} \\
      -\beta & -\frac{\Gamma \beta}{M} \\
      \end{pmatrix} \rvy_t
      + \begin{pmatrix}
        0 & 0 \\
        0 & \sqrt{2\Gamma \beta}
      \end{pmatrix} d\mbB_t .
\end{align*}
In $\mbQ/\mbD$ parameterization, we have 
\begin{align*}
    H(\rvy) &= \frac{1}{2} \norm{\rvz}_2^2 + \frac{1}{2M} \norm{\rvv}_2^2, \qquad \nabla_{\rvu} H(\rvy) = \begin{pmatrix}
      \rvz \\
      \frac{1}{M} \rvv
    \end{pmatrix} \\
    \mbQ &= \begin{pmatrix}
      0 & -{\beta} \\ 
      {\beta} & 0 
    \end{pmatrix}, \qquad 
    \mbD = \begin{pmatrix}
      0 & 0 \\
      0 & {\Gamma \beta}
    \end{pmatrix}
\end{align*}

The stationary distribution of this process is:
\begin{align}
 q_{\phi,\infty} \propto \exp(-H(\rvy)) = \gN(\rvz; 0, I_d) \gN(\rvv; 0, M I_d)   
\end{align}

\paragraph{\gls{alda}.} \cite{mou2019high} define a third-order diffusion process for the purpose of gradient-based \textsc{mcmc} sampling. The \gls{alda} diffusion process can be specified as
\begin{align}
    \label{eq:alda}
  \mbQ &=
  \begin{pmatrix}
    0 & -\frac{1}{L} I & 0\\
    \frac{1}{L}I & 0 & -\gamma I \\
    0 & \gamma I & 0
  \end{pmatrix}, \quad
  \mbD =
  \begin{pmatrix}
    0 & 0 & 0 \\
    0 & 0 & 0 \\
    0 & 0 & {\frac{\xi}{L}} I
  \end{pmatrix} .
\end{align}
Note that $\mbQ$ is skew-symmetric and $\mbD$ is positive semi-definite, therefore we have that $q_{t}(\rvu) \rightarrow q_{\phi, \infty}$.
In this case,
\begin{align*}
  q_{\phi,\infty}
                      &= \gN(\mbz;0,\mbI_d)\gN(\rvv_1; 0, \frac{1}{L}\mbI_d) \gN(\rvv_2; 0, \frac{1}{L}\mbI_d)
\end{align*}

\paragraph{\gls{malda}.} Similar to \gls{alda}, we specify a diffusion process we term \gls{malda} which 
we specify as
\begin{align}
    \label{eq:malda}
  \mbQ &=
  \begin{pmatrix}
    0 & -\frac{1}{L} I & -\frac{1}{L}\\
    \frac{1}{L}I & 0 & -\gamma I \\
    \frac{1}{L} & \gamma I & 0
  \end{pmatrix}, \quad
  \mbD =
  \begin{pmatrix}
    0 & 0 & 0 \\
    0 & {\frac{1}{L}} I & 0 \\
    0 & 0 & {\frac{1}{L}} I
  \end{pmatrix} .
\end{align}
Note that $\mbQ$ is skew-symmetric and $\mbD$ is positive semi-definite. In this case this is
\begin{align*}
  q_{\phi,\infty}
                      &= \gN(\mbz; 0,\mbI_d)\gN(\mbv_1; 0, \frac{1}{L}I_d) \gN(\mbv_2; 0, \frac{1}{L}I_d)
\end{align*}

\section{Transitions for linear processes \label{appsec:kernel}}

For time variable $s$ and Brownian motion
$\widehat{\mbB}_s$ driving diffusions of the form
\begin{align}
 d \mby = f(\mby,s)ds + g(s)d\widehat{\mbB}_s  , 
\end{align}
when $f_\phi(\mby_s,s),g_\phi(s)$ are linear, the transition kernel $q_\phi(\mby_s|\mby_0)$ is always normal \citep{sarkka2019applied}. Therefore, we just find the mean $\mbm_{s|0}$ and covariance $\mathbold{\Sigma}_{s|0}$ of $q(\mby_s|\mby_0)$. 
Let $f(\mby,s)=\mbA(s)\mby$. The un-conditional time $s$ mean and covariance are solutions to 
\begin{align}
    \label{eq:repeat_odes}
    \begin{split}
    d\mbm_s/ds &= \mbA(s)\mbm_s\\
    d \boldsymbol{\Sigma}_s /ds &= \mbA(s) \boldsymbol{\Sigma}_s + \boldsymbol{\Sigma}_s \mbA^\top(s) + g^2(s) 
    \end{split}
\end{align}
By (6.6) in \citet{sarkka2019applied}, for computing conditionals $q(\mby_s|\mby_0)$, we can take the marginal distribution ODEs and compute conditionals by simply setting the time $0$ mean and covariance initial conditions to the conditioning value and to $\mathbold{0}$ respectively. 
We take (6.36-6.39) and set $\mbm_0=\mbu_0$ and $\mathbold{\Sigma}_0=0$ to condition.
Let $[\mbA]_s=\int_0^s \mbA(\nu)d\nu$.
The mean is 
\begin{align}
\mbm_{s|0}=\exp \left[\int_0^s \mbA(\nu) d\nu \right]
\mby_0 
=
\exp \Big(\Big[ A \Big]_s\Big)
\underbrace{=\exp(s \mbA)\mby_0}_{\text{no integration if $\mbA(\nu)=\mbA$}},
\end{align}
where $\exp$ denotes matrix exponential.  (6.36-6.39) state the covariance $q(\mby_s|\mby_0)$
as a matrix factorization, for which a derivation is provided below
$\mathbold{\Sigma}_{s}=\mbC_s({\mbH_s})^{-1}$ for $\mbC_s,\mbH_s$ being the solutions of:
\begin{align}
    \label{eq:factorization_ode_repeat}
  \begin{pmatrix}
    \frac{d}{ds} \mbC_s \\
    \frac{d}{ds} \mbH_s
  \end{pmatrix} =
  \begin{pmatrix}
    \mbA(s) & g^2(s) \\
    \mathbold{0} & -\mbA^\top(s)
  \end{pmatrix}
        \begin{pmatrix}
          \mbC_s \\
          \mbH_s
        \end{pmatrix}
\end{align}
To condition and get $\mathbold{\Sigma}_{s|0}$ from $\mathbold{\Sigma}_{s}$,
we set $\mathbold{\Sigma}_0=\mathbold{0}$, and initialize $\rmC_s, \rmH_s$ by
$\mbC_0=\mathbold{0}$ and $\mbH_0=\mathbold{I}$.
    \begin{align}
      \begin{pmatrix}
        \mbC_s \\
        \mbH_s
      \end{pmatrix} =
      \exp\Bigg[
      \begin{pmatrix}
        [\mbA]_s & [g^2]_s \\
        \mathbold{0} &  -[\mbA^\top]_s
      \end{pmatrix}
      \Bigg]
            \begin{pmatrix}
              \mathbold{0} \\
              \mathbold{I} 
            \end{pmatrix} 
            \underbrace{=
    \exp\Bigg[s
      \begin{pmatrix}
        \mbA & g^2 \\
        \mathbold{0} & -\mbA^{\top}
      \end{pmatrix}
      \Bigg]
            \begin{pmatrix}
              \mathbold{0} \\
              \mathbold{I}
            \end{pmatrix}}_{\text{no integration if $\mbA(\nu)=\mbA,g(\nu)=g$}}.
    \end{align}
Finally, $\mathbold{\Sigma}_{s|0}=\mbC_s({\mbH_s})^{-1}$.

\subsection{Derivation of the Covariance matrix solution}

\Cref{eq:repeat_odes} gives an expression
for $d\boldsymbol{\Sigma}_s/ds$. To derive the matrix factorization technique used in 
\cref{eq:factorization_ode_repeat},
we use \cref{eq:repeat_odes} 
and the desired condition
$\boldsymbol{\Sigma}_s=\mbC_s \mbH_s^{-1}$
to derive expressions for $d\mbC_s/ds$ 
and $d \mbH_s / ds$ and suitable intial conditions
so that the factorization also starts
at the desired $\boldsymbol{\Sigma}_0$. Let $\mathbold{\Sigma}_{s} = \rmC_s \rmH_s^{-1}$, then note that $\rmC_s, \rmH_s$ satisfies
\begin{align*}
    \frac{d}{ds} \mathbold{\Sigma}_{s} &= \frac{d}{ds} \rmC_s \rmH_s^{-1} \\
    &= \rmC_s \frac{d}{ds} \rmH_s^{-1} +  \left(\frac{d}{ds} \rmC_s \right) \rmH_s^{-1}
\end{align*}
And using the fact that 
\begin{align*}
    \frac{d}{ds} \rmH_s \rmH_s^{-1} &= 0 \\
    \rmH_s \frac{d}{ds} \rmH_s^{-1} + \frac{d}{ds} \rmH_s  \left(\rmH_s^{-1}\right)  &= 0 \\
    \frac{d}{ds} \rmH_s^{-1} &= - \rmH_s^{-1} \frac{d}{ds} \rmH_s  \left(\rmH_s^{-1}\right) 
\end{align*}
we get that 
\begin{align*}
    \rmC_s \frac{d}{ds} \rmH_s^{-1} +  \left(\frac{d}{ds} \rmC_s \right) \rmH_s^{-1} 
    &= - \rmC_s \rmH_s^{-1} \frac{d}{ds} \rmH_s  \left(\rmH_s^{-1}\right) + \left(\frac{d}{ds} \rmC_s \right) \rmH_s^{-1} \\
   - \rmC_s \rmH_s^{-1} \frac{d}{ds} \rmH_s  \left(\rmH_s^{-1}\right) + \left(\frac{d}{ds} \rmC_s \right) \rmH_s^{-1} &= \rmA(s) \rmC_s \rmH_s^{-1} + \rmC_s \rmH_s^{-1} \rmA^\top(s) + g^2(s) \\
    &= \rmA(s) \rmC_s \rmH_s^{-1} + \rmC_s \rmH_s^{-1} \rmA^\top(s) \rmH_s \rmH_s^{-1} + g^2(s)\rmH_s \rmH_s^{-1} \\
   \left( - \rmC_s \rmH_s^{-1} \frac{d}{ds} \rmH_s   + \frac{d}{ds} \rmC_s   \right) \rmH_s^{-1} &= 
   \left( \rmA(s) \rmC_s + \rmC_s \rmH_s^{-1}  \rmA^\top(s) \rmH_s + g^2(s) \rmH_s  \right) \rmH_s^{-1} \\
   - \rmC_s \rmH_s^{-1} \frac{d}{ds} \rmH_s   + \frac{d}{ds} \rmC_s    &= 
   \rmA(s) \rmC_s + \rmC_s \rmH_s^{-1}   \rmA^\top(s) \rmH_s + g^2(s) \rmH_s \\
   \left[ \rmC_s \rmH_s^{-1} \quad \rmI_d \right]^\top 
  \frac{d}{ds} \begin{pmatrix}
     \rmH_s \\
     \rmC_s
   \end{pmatrix}
   &= \left[ \rmC_s \rmH_s^{-1} \quad \rmI_d \right]^\top
   \begin{pmatrix}
     -\rmA^\top(s) \rmH_s \\
     \rmA(s) \rmC_s + g^2(s) \rmH_s
   \end{pmatrix}
\end{align*}
Now, we note $\rmC_s, \rmH_s$ satisfy the following
\begin{align*}
    \frac{d}{ds} \rmH_s &= -\rmA^{\top}(s) \rmH_s \\
     \frac{d}{ds} \rmC_s &= \rmA(s)  \rmC_s + g^2(s) \rmH_s
\end{align*}
which implies that 
\begin{align}
\frac{d}{ds}
  \begin{pmatrix}
     \mbC_s \\
     \mbH_s
  \end{pmatrix} =
  \begin{pmatrix}
    \mbA(s) & g^2(s) \\
    \mathbold{0} & -\mbA^\top(s)
  \end{pmatrix}
        \begin{pmatrix}
          \mbC_s \\
          \mbH_s
        \end{pmatrix}
\end{align}
with $\rmC_0 = \mathbold{\Sigma}_0$ and $\rmH_0 = \rmI_d$, as $\rmC_0 \rmH_0^{-1} = \mathbold{\Sigma}_0$.

\subsection{Hybrid Score Matching}

Instead of computing $q(\mby_s|\mby_0)$, we can apply the hybrid score matching principle \citep{dockhorn2021score}
to reduce variance by compute objectives using $q(\mby_s|x)$ instead of $q(\mby_s|\mby_0)$, which amounts to integrating out $\mbv_0$.  To accomplish this,
following \cite{sarkka2019applied},
we simply replace
$\mby_0$ with $[x,\E[\mbv_0]]$
in the expression for $\mathbold{m}_{s|0}$,
i.e. replace the conditioning value of $\mbv_0$ with the mean of its chosen initial distribution:
\begin{align}
    \E[\mby_s|x]
    =
    \exp
    \Bigg[ 
    \int_0^s A(\nu) d\nu 
    \Bigg] 
    \begin{pmatrix}
      x\\
      \E[\mbv_0]
    \end{pmatrix}
\end{align}

For the convariance,
instead of using $\mathbold{C}_0=\mathbold{\Sigma}_{0}=\mathbold{0}$,
we use a block matrix
to condition on $x$ but not $\mbv_0$.
We decompose $\mathbold{\Sigma}_0$ into its blocks
$\mathbold{\Sigma}_{0,xx}$,
$\mathbold{\Sigma}_{0,vv}$
,$\mathbold{\Sigma}_{0,xv}$.
As before, to condition on $x$ we set $\mathbold{\Sigma}_{0,xx}=\mathbold{0}$.
Because $q(\mbv_0)$ is set to be independent of $x$,
$\mathbold{\Sigma}_{0,xv}$ is also set to $\mathbold{0}$. Finally,
instead of $\mathbold{0}$,
to marginalize out  $\mbv_0$,
$\mathbold{\Sigma}_{0,vv}$ is set to the covariance
of the chosen initial time zero distribution for $\mbv_0$. E.g. if $\mbv_{0,j} \sim N(0,\gamma)$ for
each dimension, then $\mathbold{\Sigma}_{0,vv}=N(0,\gamma I)$.

We operationalize this in a simple piece of code,
which makes the \gls{elbo}  tractable and easy, i.e. skips both analytic derivations and numerical forward integration during training.

\subsection{Transitions in Stationary Parameterization}

In terms of $\mbQ,\mbD$, the transitions $q(\mby_s | \mby_0)$ for time $s$ are normal with mean $\mbm_{s|0}$ and $\mathbold{\Sigma}_{s|0}$ equal to:
\begin{align}
    \label{eq:QD_transition_t}
    \mbm_{s|0}=\exp \Big(-
    \Big[\mbQ + \mbD \Big]_s
    \Big) \mby_0,
    \quad \quad 
        \begin{pmatrix}
        \mbC_s \\
        \mbH_s
      \end{pmatrix} =
      \exp\Bigg[
      \begin{pmatrix}
        -[\mbQ + \mbD]_s & [2\mbD]_s \\
        \mathbold{0} &  [(\mbQ + \mbD)^\top]_s
      \end{pmatrix}
      \Bigg]
            \begin{pmatrix}
              \mathbold{0} \\
              \mathbold{I} 
            \end{pmatrix} 
\end{align}
where $\mathbold{\Sigma}_{s|0}=\mbC_s({\mbH_s})^{-1}$.
For the time invariant case, this simplifies to 
\begin{align}
    \label{eq:QD_transition}
    \mbm_{s|0}=\exp[-s(\mbQ + \mbD)]\mby_0, \quad \quad
      \begin{pmatrix}
        \mbC_s \\
        \mbH_s
      \end{pmatrix} =
    \exp\Bigg[s
      \begin{pmatrix}
        -(\mbQ + \mbD) & 2 \mbD \\
        \mathbold{0} & (\mbQ + \mbD)^{\top}
      \end{pmatrix}
      \Bigg]
            \begin{pmatrix}
              \mathbold{0} \\
              \mathbold{I}
            \end{pmatrix}
    \end{align}
    
\section{Generic change of measure and Jensen's for approximate marginalization \label{appsec:change_of_measure}}

Suppose $\mbu=[\mbz,\mbv]$ and we have an expression for
$p(\mbu=[z,v])=p(\mbz=z,\mbv=v)$. By marginalization, we can get $p(\mbz=z)$, and we can introduce another distribution $q$ to pick a sampling distribution of our choice:
\begin{align}
    \label{eq:marginal_expect}
\begin{split}
        p(\mbz=z) &= \int_v p(\mbz=z,\mbv=v) dv\\
                       &= \int_v 
                        p(\mbz=z|\mbv=v)p(\mbv=v) dv\\
                       &= \int_v \frac{q(\mbv=v|\mbz=z)}{q(\mbv=v|\mbz=z)}p(\mbz=z|\mbv=v)p(\mbv=v) dv\\
                       &=
                       \E_{q(\mbv=v|\mbz=z)}
                        \Big[ 
                            \frac{p(\mbz=z,\mbv=v)}{q(\mbv=v|\mbz=z)}
                        \Big] 
\end{split}
\end{align}
We often work with these expressions in log space, and need to pull the expectation outside to use Monte Carlo. Jensen's bound allows this:
\begin{align*}
    \log p(\mbz=z) &= \log \E_{q(\mbv=v|\mbz=z)}
                        \Big[ 
                            \frac{p(\mbz=z,\mbv=v)}{q(\mbv=v|\mbz=z)}
                        \Big] \\
                & \geq \E_{q(\mbv=v|\mbz=z)}
                        \Big[ 
                            \log 
                            \frac{p(\mbz=z,\mbv=v)}{q(\mbv=v|\mbz=z)}
                        \Big] 
\end{align*}
The following shows that the bound is tight when
$q(\mbv=v|\mbz=z) = p(\mbv=v|\mbz=z)$:
\begin{align}
    \label{eq:tight_r}
    \begin{split}
    \E_{q(\mbv=v|\mbz=z)}
                        \Big[ 
                            \log 
                            \frac{p(\mbz=z,\mbv=v)}{q(\mbv=v|\mbz=z)}
                        \Big] 
    &=_{\text{assume}}
    \E_{p(\mbv=v|\mbz=z)}
                        \Big[ 
                            \log 
                            \frac{p(\mbz=z,\mbv=v)}{p(\mbv=v|\mbz=z)}
                        \Big] \\
    &=
 \E_{p(\mbv=v |\mbz=z)}
        \Big[ 
                \log\Big( \frac{p(\mbz=z,\mbv=v)}{p(\mbv=v,\mbz=z)} \cdot
                p(\mbz=z) \Big)
        \Big]\\
 &=
 \E_{p(\mbv=v | \mbz=z)}
        \Big[ 
                \log
                p(\mbz=z)
        \Big]\\
&=\log p(\mbz=z)    
    \end{split}
\end{align}

\section{\gls{elbo}  for \glspl{mdm} \label{appsec:mdm_elbo}}

\begin{align}\log p_\theta(x) &= \log \int_{v_0} p_\theta(x_0,v_0)dv_0\\ &= \log \int_{v_0}p_\theta(u_0=[x,v_0])\\
&= \log \int_{v_0}  \frac{q(v_0|x)}{q(v_0|x)}
p_\theta(u_0=[x,v_0])\\
&= \log \mathbb{E}_{q(v_0|x)}
\Bigg[ \frac{p_\theta(u_0=[x,v_0])}{q(v_0|x)}\Bigg]\\
& \geq \mathbb{E}_{q(v_0|x)}
\Bigg[ \log p_\theta(u_0=[x,v_0]) - \log q(v_0|x)\Bigg]\\
& \geq 
\mathbb{E}_{q(y|x)}
\Bigg[ 
\log \pi_\theta(y_T)
+
\int_0^T
- \|s_\theta \|_{g^2}^2 - \nabla \cdot (g^2 s_\theta - f)
ds
 - \log q(y^v_0|x)
\Bigg]
 \end{align}
The first inequality holds due to Jensen's inequality and the second due to an application of Theorem 1 from \cite{huang2021variational} or Theorem 3 from
\cite{song2021maximum} applied to the joint variable $\mbu_0$.

\subsection{ISM to DSM \label{appsec:ism_to_dsm}} 

\subsubsection{Lemma: expectation by parts} 

We will need a form of multivariate integration by parts which gives us for some $f$ and some $q(x)$,  $E_{q(x)} [\nabla_{x} \cdot f(x)] =-E_{q(x)}[f(x)^{\top} \nabla_{x} \log q(x)]$

\begin{align*}
 E_{q(x)} [\nabla_{x} \cdot f_i(x)] 
  &= \int q(x) \sum_{i=1}^d [\nabla_{x_i} f_i(x)] dx \\
  &= \int \sum_{i=1}^{d} q(x) \nabla_{x_i} f_i(x) dx \\
  &= \sum_{i=1}^{d} \int_{x_{-i}} \int_{x_i} q(x) \nabla_{x_i} f_i(x) dx_i dx_{-i} \\
  &= \sum_{i=1}^d \int \Bigg [ \Big[q(x) \int \nabla_{x_i} f_i(x) dx_i \Big]_{-\infty}^{\infty} - \int \nabla_{x_i} q(x) \int \nabla_{x_i} f_i(x) dx_i] \Bigg] dx_{-i} \\
  &= \sum_{i=1}^d \int \Bigg[ - \int \nabla_{x_i} q(x) f_i(x) dx_i \Bigg] dx_{-i} \\
  &= \sum_{i=1}^d \int \Bigg[ - \int q(x) \nabla_{x_i} \log q(x) f_i(x) dx_i \Bigg] dx_{-i} \\
  &= \sum_{i=1}^d - \int \int q(x) \nabla_{x_i} \log q(x) f_i(x) dx_i  dx_{-i} \\
  &= \sum_{i=1}^d - E_{q(x)} \Big[ \nabla_{x_i} \log q(x) f_i(x) \Big] \\
  &= -E_{q(x)}[f(x)^{\top} \nabla_{x} \log q(x)]
\end{align*}
This equality also follows directly from the Stein operator using the generator method to the Langevin diffusion~\citep{barbour1988stein}.

\subsubsection{DSM Elbo}
Using the ``expectation by parts", we have:
\begin{align*}
    \mathbb{E}_{q(u_t|x)}[\nabla_{u_t} \cdot g^2(t) s_\theta(u_t,t)] = -\mathbb{E}_{q(u_t|x)}[(g^2(t)s_\theta(u_t,t))^\top \nabla_{u_t} \log q(u_t|x)]
\end{align*}
Also we have, for $s_\theta$ evaluated at $(u_t,t)$,
by completing the square,
\begin{align*}
    &-\frac{1}{2}||s_\theta||_{g^2(t)} + s_\theta^\top g^2(t) \nabla \log q(u_t|x) = -\frac{1}{2} ||s_\theta - \nabla \log q(u_t|x)||_{g^2(t)}^2 + .5||\nabla \log q(u_t|x)||_{g^2(t)}^2
\end{align*}
The two together give us:
\begin{align}
\begin{split}
    \log p(x)
        &\geq \E_{q(u_T|x)}\Bigg[\log \pi \Bigg] + \int_0^T 
        \Bigg[\mathbb{E}_{q(u_t|x)} \Big[-\nabla \cdot g^2 s_\theta - .5||s_\theta||_{g^2(t)}^2 + \nabla \cdot f \Big] dt \Bigg]\\
        &= \E_{q(u_T|x)}\Bigg[\log \pi \Bigg] + \int_0^T 
        \Bigg[\mathbb{E}_{q(u_t|x)} \Big[(g^2 s_\theta)^\top \nabla_{u_t} \log q(u_t|x) -.5||s_\theta||_{g^2(t)}^2  +\nabla \cdot f \Big] dt \Bigg]\\
                &= \E_{q(u_T|x)}\Bigg[\log \pi \Bigg] +
        \int_0^T 
        \Bigg[\mathbb{E}_{q(u_t|x)} \Big[-\frac{1}{2} ||s_\theta - \nabla \log q(u_t|x)||_{g^2(t)}^2 
        \\
        & \hspace{18em} + .5||\nabla \log q(u_t|x)||_{g^2(t)}^2  + \nabla_{u_t} \cdot f \Big]  \Bigg]dt
\end{split}
\end{align}

\subsection{Noise prediction}
We have that for normal $\mathcal{N}(\mby_s ; \mbm_{s|0},
\boldsymbol{\Sigma}_{s|0})$,
we can sample $\mby_s$ with normal noise $\epsilon \sim \mathcal{N}(0,I)$ and
$\mby_s = \mbm_{s|0} + \mbL \epsilon$ where $\mbL$ is the cholesky decomposition of $\boldsymbol{\Sigma}_{s|0}$ Then, the score is
\begin{align*}
\nabla_{\mby_s}
    &\log q(\mby_s | \mby_0) \Bigg|_{\mby_s = \mbm_{s|0} + \mbL \epsilon} \\
    &=
    -\boldsymbol{\Sigma}_{s|0}^{-1}
    \Big(\mby_s -\mbm_{s|0} \Big)\\
    &=-\boldsymbol{\Sigma}_{s|0}^{-1}
    \Big(\Big[\mbm_{s|0} + \mbL \epsilon\Big] -\mbm_{s|0} \Big)\\
    &=-\boldsymbol{\Sigma}_{s|0}^{-1}
    \Big(\mbL \epsilon\Big)\\
    &=
    -
    \Big(\mbL \mbL^\top\Big)^{-1}
    \Big(\mbL \epsilon\Big)\\
        &=
    -
    \Big(\mbL^\top\Big)^{-1} \mbL^{-1}
    \mbL \epsilon \\
       &=
    -
    \Big(\mbL^\top\Big)^{-1} \epsilon
    =
     -
    \Big(\mbL^{-1}\Big)^\top\epsilon
    =
    -\mbL^{\top,-1} \epsilon
\end{align*}

Parameterize $s_\theta(\mby_s,s)$
as $s_\theta(\mby_s,s)
    =
    -\mbL^{\top,-1} \epsilon_\theta(\mby,s)$.
This gives
\begin{align*}
    &\frac{1}{2} 
    \| 
    -\mbL^{\top,-1} \epsilon_\theta(\mby,s)
    \quad
    -
    \quad 
    -\mbL^{\top,-1} \epsilon
   \|_{g_\phi^2(s)}^2\\
    &=
    \frac{1}{2} 
    \|
    \mbL^{\top,-1} \epsilon \quad 
    -
    \quad 
    \mbL^{\top,-1} \epsilon_\theta(\mby,s)
    \|_{g_\phi^2(s)}^2\\
    &=
    \frac{1}{2}
    \Bigg( 
    \mbL^{\top,-1} \epsilon \quad 
    -
    \quad 
    \mbL^{\top,-1} \epsilon_\theta(\mby,s)
    \Bigg)^\top 
    g_\phi^2(s)
    \Bigg( 
    \mbL^{\top,-1} \epsilon \quad 
    -
    \quad 
    \mbL^{\top,-1} \epsilon_\theta(\mby,s)
    \Bigg)\\
    &=
    \frac{1}{2}
    \Bigg( 
    \mbL^{\top,-1}
    \Big[ 
    \epsilon - \epsilon_\theta(\mby,s)
    \Big]
    \Bigg)^\top 
    g_\phi^2(s)
   \Bigg( 
    \mbL^{\top,-1}
    \Big[ 
    \epsilon - \epsilon_\theta(\mby,s)
    \Big]
    \Bigg)
\end{align*}
We can also use this insight to analytically compute the quadratic score term (following is computed per data-dimension, so must be multiplied by $D$ when computing the \gls{elbo}):
\begin{align*}
\E_{\mby_0}
\E_{\mby_s|\mby_0}
\Bigg[ 
    \frac{1}{2}
    \| 
    \nabla_{\mby_s}
    \log q_\phi(\mby_s|\mby_0)
    \|^2_{g_\phi^2(s)}
\Bigg] 
    &=
    \E_{\mby_0}
\E_{\mby_s|\mby_0}
\Bigg[ 
    \Big(
    \nabla_{\mby_s}
    \log q_\phi(\mby_s|\mby_0)
    \Big)^\top 
    g_\phi^2(s)
    \Big(
    \nabla_{\mby_s}
    \log q_\phi(\mby_s|\mby_0)
    \Big)
    \Bigg]\\
    &=
    \E_{\mby_0}
\E_{\mby_s|\mby_0}
\Bigg[ 
    \Big(
     -\mbL^{\top,-1} \epsilon
    \Big)^\top 
    g_\phi^2(s)
    \Big(
    -\mbL^{\top,-1} \epsilon
    \Big)\Bigg]\\
        &=
        \E_{\mby_0}
\E_{\mby_s|\mby_0}
\Bigg[ 
     \epsilon^\top 
     (-\mbL^{-1})
    g_\phi^2(s)
    (-\mbL^{\top,-1}) \epsilon \Bigg]\\
    &=
    \E_{\mby_0}
\E_{\mby_s|\mby_0}
\Bigg[ 
     \epsilon^\top 
     \Big(
     \mbL^{-1}
    g_\phi^2(s)
    \mbL^{\top,-1}
    \Big)\epsilon
\Bigg]\\
    &=
    \E_{\mby_0}
\E_{\epsilon}
\Bigg[ 
     \epsilon^\top 
     \Big(
     \mbL^{-1}
    g_\phi^2(s)
    \mbL^{\top,-1}
    \Big)\epsilon
\Bigg]\\
  &=
\E_{\epsilon}
\Bigg[ 
     \epsilon^\top 
     \Big(
     \mbL^{-1}
    g_\phi^2(s)
    \mbL^{\top,-1}
    \Big)\epsilon
\Bigg]\\
&=
\text{Trace}
\Bigg( 
\mbL^{-1}
    g_\phi^2(s)
    \mbL^{\top,-1}
\Bigg)
\end{align*}

\section{elbos in stationary parameterization \label{appsec:stationary_elbo}}

We use the stationary parmeterization described in 
\cref{appsec:stationary}.
We now specialize the \gls{elbo}  to the linear stationary parameterization.

Recall $f_\phi(\mby,s)=-[\mbQ_\phi(s)+\mbD_\phi(s)]\mby$.
Recall $g_\phi(s)=\sqrt{2\mbD_\phi(s)}$
We have $g_\phi^2(s) = 2 \mbD_\phi(s)$.
We can write the \gls{mdm} \gls{ism} \gls{elbo} as
\begin{align}
\mathcal{L}^{\text{mism}}=
    \E_{v \sim q_\gamma}
    \Bigg[
        \E_{s \sim \text{Unif}(0,T)}
        \Big[ 
            \ell_{s}^{(ism)}
        \Big]
        +
        \ell_{T}
        + \ell_q
    \Bigg]
\end{align}
where
\begin{align}
\begin{split}
    \ell_{s_\theta}
    &=
    -\frac{1}{2} \|s_\theta(\mby_s,s) \|^2_{\underbrace{2\mbD_\phi(s)}_{g^2_\phi}} 
    \\
    \ell_{\text{div-fgs}}
    &= 
    \nabla_{\mby_s} \cdot 
    \Big[ 
    \underbrace{-[\mbQ_\phi(s) + \mbD_\phi(s)]\mby_s}_{f_\phi}
    -
   \underbrace{2\mbD_\phi(s)}_{g_\phi^2} s_\theta(\mby_s,s) 
    \Big]\\
     \ell_s^{\text{ism}}
        &=
        \E_{\underbrace{q_{\phi,s,(x,v)}}_{\text{depends on } \mbQ,\mbD}}
        \Big[
        \ell_{s_\theta}
        +
        \ell_{\text{div-fgs}}\Big]\\
    \ell_T
    &= \E_{\underbrace{q_{\phi,T},(x,v)}_{{\text{depends on } \mbQ,\mbD}}}
        \Big[
            \log \pi_\theta(\mby_T)
        \Big]\\
    \ell_q
    &= -\log q_\gamma(v|x)
\end{split}
\end{align}

For the \gls{dsm} form, 
\begin{align}
   \mathcal{L}^{\text{mdsm}}=
    \E_{v \sim q_\gamma}
    \Bigg[
        \E_{s \sim \text{Unif}(0,T)}
        \Big[ 
            \ell_{s}^{(dsm)}
        \Big]
        +
        \ell_{T}
        + \ell_q
    \Bigg]
\end{align}
where
\begin{align*}
\ell_{\text{div-f}}
&= 
\nabla_{\mby_s} \cdot 
    \underbrace{-[\mbQ_\phi(s) + \mbD_\phi(s)]\mby_s}_{f_\phi}\\
\ell_{\text{fwd-score}}
&=\frac{1}{2}\Big | \Big |
\underbrace{\nabla_{\mby_s} \log q_\phi(\mby_s|\mby_0)}_{\text{depends on } \mbQ,\mbD}
\|^2_{\underbrace{2\mbD_\phi(s)}_{g^2_\phi}} \\
\ell_{\text{neg-scorediff}}
&=
-\frac{1}{2} \|s_\theta(\mby_s,s)
        - 
        \underbrace{\nabla_{\mby_s} \log q_\phi(\mby_s|\mby_0)}_{\text{depends on } \mbQ,\mbD}
   \|^2_{\underbrace{2\mbD_\phi(s)}_{g^2_\phi}}  \\
    \ell_s^{(dsm)} 
     &=
     \E_{\underbrace{q_{\phi,s,(x,v)}}_{\text{depends on } \mbQ,\mbD}}
\Bigg[
\ell_{\text{neg-scorediff}}
+
\ell_{\text{fwd-score}}
+ \ell_{\text{div-f}}
    \Bigg]
\end{align*}

\newpage

\section{Algorithms \label{appsec:algo}}

\subsection{Generic Transition Kernel}

\begin{algorithm}[h!]
\begin{algorithmic}

 \STATE {\bfseries Input:}
 data $x$. time $s$.
 $\mbA,g$.

 \STATE {\bfseries compute:}
    $\mbA(s)$ and $g(s)$

\STATE {\bfseries compute:}
    $\mbM_s = \int_0^s \mbA(t) dt$ (integrated drift)

\STATE {\bfseries compute:}
    $\mbN_s = \int_0^s g^2(t) dt$ (integrated diffusions squared)

\STATE {\bfseries compute:} 
$\gamma_{s|0}=\exp
    \Big( 
    \mbM_s
    \Big)$ (mean coefficient)
    
    \STATE {\bfseries set:}
      $\mby_0=[x,0_{1},\ldots, 0_{K-1}]$ ,
      $\boldsymbol{\Sigma}_{0,zz}=\boldsymbol{0}$,
      and
    $\boldsymbol{\Sigma}_{0,zv},\boldsymbol{\Sigma}_{0,vv}$ to chosen initial distribution
\STATE {\bfseries compute:}
 $\mbm_{s|0}=\gamma_{s|0}\mby_0$ (\textbf{mean})
\STATE {\bfseries compute:} 
        \begin{align}
            \begin{pmatrix}
        \mbC_s \\
        \mbH_s
      \end{pmatrix} =
      \exp\Bigg[
      \begin{pmatrix}
        \mbM_s & \mbN_s \\
        \mathbold{0} &  -\mbM_s^\top 
      \end{pmatrix}
      \Bigg]
            \begin{pmatrix}
              \mathbold{\Sigma}_0 \\
              \mathbold{I} 
            \end{pmatrix} 
            \quad 
            \text{(ingredients for cov.)}
        \end{align}
\STATE {\bfseries compute:} $\mathbold{\Sigma}_{s|0}=\mbC_s({\mbH_s})^{-1} \quad 
(\textbf{cov.})$
\STATE{ \bfseries Output:} $\mathcal{N}(\mbm_{s|0},\boldsymbol{\Sigma}_{s|0})$

\end{algorithmic}
\vskip -0.05in
\caption{\label{alg:transition_generic} Get 
transition distribution  $\mby_s|x$}
\end{algorithm}

\subsection{Transitions with $Q,D$}

Current param matrices $\tilde{\mbQ}_\phi,\tilde{\mbD}_\phi$ and
along  with fixed time-in scalar-out functions $b_q(s),b_d(s)$ and their known integrals
    $B_q(s),B_d(s)$.  $q_\gamma(v_0|z_0=x)$ taken
    to be parameterless so that $v_0 \sim \mathcal{N}(0,I)$. Model params are $s_\theta$ fixed $\pi_\theta$.

 \begin{algorithm}[h!]
\begin{algorithmic}
 \STATE {\bfseries Input:}
 time $s$ and current params $\phi$
 
 \STATE {\bfseries compute:}
    $[b_q]_s=\int_0^s b_q(\nu)d\nu$ using known integral
$B_q(s)-B_q(0)$

\STATE {\bfseries compute:}
    $[b_d]_s=\int_0^s b_d(\nu) d\nu$
using known integral
$B_d(s)-B_d(0)$.
    \STATE {\bfseries compute:}
    $[\mbQ_\phi]_s=[b_q]_s \cdot \Big[\tilde{\mbQ}_\phi  -\tilde{\mbQ}_\phi^\top\Big]$
for current params $\tilde{\mbQ}_\phi$.
\STATE {\bfseries compute:}
$[\mbD_\phi]_s=[b_d]_s \cdot \Big[\tilde{\mbD}_\phi
    \tilde{\mbD}_\phi^\top \Big]$
    for current params
    $\tilde{\mbD}_\phi$.
 
\STATE {\bfseries compute:}
    $\mbM_s = - ([\mbQ_\phi]_s+[\mbD_\phi]_s)$
    ($\mbM$ just a variable name)
\STATE {\bfseries compute:}
    $\mbN_s = [2\mbD_\phi]_s = 2 \cdot [\mbD_\phi]_s$
    ($\mbN$ just a variable name)
\STATE {\bfseries compute:} $\mbQ_s = b_q(s) \cdot
   \Big[\tilde{\mbQ}_\phi 
    -\tilde{\mbQ}_\phi^\top\Big]$ (not integrated)
\STATE {\bfseries compute:} $\mbD_s = b_d(s) \cdot 
    \Big[\tilde{\mbD}_\phi
    \tilde{\mbD}_\phi^\top \Big]$
    (not integrated)

\STATE {\bfseries compute:} $A_s= -[\mbQ_s + \mbD_s] $ (drift coef.)
\STATE {\bfseries compute:} $g_s^2 = 2 \mbD_s$ (diffusion coef. squared)

\STATE{ \bfseries Output:}
    $\mbA_s,g^2_s,
    \mbM_s,\mbN_s$

\end{algorithmic}
\vskip -0.05in
\caption{\label{alg:get_qd} Get $\mbQ,\mbD$ and their integrated terms
$\mbM,\mbN$}
\end{algorithm}

\begin{algorithm}[h!]
\begin{algorithmic}
 \STATE {\bfseries Input:}
    Sample $\mby_0=(x,v)$ and time $s$. Current params $\phi$
    
\STATE {\bfseries set:}
 $\mbA_s,g^2_s,
    \mbM_s,\mbN_s
    \leftarrow$ \cref{alg:get_qd}
   
\STATE {\bfseries compute:} $\mbm_{s|0}=\exp
    \Big( 
    \mbM_s
    \Big) \mby_0$ (transition mean)
\STATE {\bfseries compute:} ingredients for transition cov. matrix:
        \begin{align}
            \begin{pmatrix}
        \mbC_s \\
        \mbH_s
      \end{pmatrix} =
      \exp\Bigg[
      \begin{pmatrix}
        \mbM_s & \mbN_s \\
        \mathbold{0} &  -\mbM_s^\top 
      \end{pmatrix}
      \Bigg]
            \begin{pmatrix}
              \mathbold{0} \\
              \mathbold{I} 
            \end{pmatrix} 
        \end{align}
\STATE {\bfseries compute:} $\mathbold{\Sigma}_{s|0}=\mbC_s({\mbH_s})^{-1}$ (transition cov).
\STATE {\bfseries instantiate:} $q_{\phi,s,(x,v)} = q_\phi(\mby_s|\mby_0) = \mathcal{N}(\mbm_{s|0},\boldsymbol{\Sigma}_{s|0})$.
\STATE{ \bfseries Output:} $q_{\phi,s,(x,v)}, A_s, g_s^2$
\end{algorithmic}
\vskip -0.05in
\caption{\label{alg:transition} Get transition distributions}
\end{algorithm}

\subsection{elbo algorithms}

\begin{algorithm}[h!]
\begin{algorithmic}
 \STATE {\bfseries input:}
    Data point $x$ and current params $\theta,\phi,\gamma$
     \STATE {\bfseries draw:} an aux. sample $v \sim q_\gamma(v|x)$
     \STATE {\bfseries draw:} a sample $s \sim \text{Unif}(0,T)$
     \STATE {\bfseries set:} 
     $\mby_0=(x,v)$
     \STATE {\bfseries set:} 
     $q_{\phi,s,\mby_0}, A_s, g_s^2 \leftarrow$
     \cref{alg:transition} called on $\mby_0, s, \phi$
     \STATE {\bfseries draw:} $\mby_{s} \sim q_{\phi,s,\mby_0}$
      \STATE {\bfseries compute:}
      $\ell_{s}$ with $\text{dsm}(s)$ (\cref{alg:dsm}) or $\text{ism}(s)$
      (\cref{alg:ism})
      on $\mby_{s}, \theta, A_s, g_s^2,
      q_{\phi,s,\mby_0}$
      \STATE {\bfseries set:} 
      $q_{\phi,T,\mby_0}, \_\_, \_\_ \leftarrow$
      \cref{alg:transition} called on $\mby_0, T, \phi$
     \STATE {\bfseries draw:}  $\mby_{T} \sim q_{\phi,T,\mby_0}$
    \STATE{ \bfseries output:
$\ell_s + \log \pi_\theta(\mby_T) - \log q_\gamma(v)$}
\end{algorithmic}
\vskip -0.05in
\caption{\label{alg:elbo}Compute \gls{elbo}  with ism or dsm}
\end{algorithm}

\begin{algorithm}[h!]
\begin{algorithmic}
 \STATE {\bfseries input:}
 $\mby_s$, $\theta$, $A_s$, $g_s^2$, $q_{\phi,s,\mby_0}$.

     \STATE {\bfseries compute:}
     $\text{fwd-score}=\nabla_{\mby_s}  \log q_\phi(\mby_s|\mby_0)$
     
     \STATE {\bfseries compute:}
 $\text{model-score}=s_\theta(\mby_s,s)$
     
      \STATE {\bfseries compute:}
      $\text{fwd-score-term}=\frac{1}{2}(\text{fwd-score})^\top g_s^2 (\text{fwd-score})$
      
      \STATE {\bfseries compute:}
      $\text{score-diff}=\text{model-score} -  \text{fwd-score}$
      
    \STATE {\bfseries compute:}
    $\text{diff-term}=
    -\frac{1}{2} 
    \text{score-diff}^\top 
    g_s^2
    \text{score-diff}
    $
    \STATE {\bfseries compute:}
    $\text{div-f}= \nabla_{\mby_s} \cdot A_s \mby_s$

    \STATE{ \bfseries output:}
   $ \text{dsm}(s) = \text{fwd-score-term} +
    \text{diff-term} + \text{div-f}$

\end{algorithmic}
\vskip -0.05in
\caption{\label{alg:dsm}Compute $\text{dsm}(s)$}
\end{algorithm}

\begin{algorithm}[h!]
\begin{algorithmic}
 \STATE {\bfseries input:}
 $\mby_s$, $\theta$, $A_s$, $g_s^2$, $q_{\phi,s,\mby_0}$.

    \STATE {\bfseries compute:}
 $\text{model-score}=s_\theta(\mby_s,s)$
 
  \STATE {\bfseries compute:}
  $\text{score-term}=-\frac{1}{2}
    \text{model-score}^\top g_s^2 \text{model-score}$
    
     \STATE {\bfseries compute:}
     $\text{div-gs}
     =
     \nabla_{\mby_s} \cdot
     g_s^2 s_\theta(\mby_s,s)
     $
 \STATE {\bfseries compute:}
    $\text{div-f}= \nabla_{\mby_s} \cdot A_s \mby_s$
     \STATE {\bfseries compute:}
     $\text{div-term} = 
     - \text{div-gs}
     +
     \text{div-f}$
     
     \STATE{ \bfseries output:}
      $\text{ism}(s)
      =
      \text{score-term}
      +
      \text{div-term}
      $
\end{algorithmic}
\vskip -0.05in
\caption{\label{alg:ism}Compute $\text{ism}(s)$}
\end{algorithm}

\newpage 

\section{Valid \gls{elbo} with truncation}\label{appsec:offset_math}
The integrand in the \gls{elbo} and its gradients is not bounded at time $0$. Therefore, following \cite{sohl2015deep} and \cite{song2021maximum} the integrand in \cref{eq:mdm_bound} is integrated from $[\epsilon, T]$, rather than $[0, T]$. However, that integral is not a valid lower bound on $\log p_\theta(x)$. 
Instead, it can be viewed as a proper lower bound on the prior for a latent variable $\mby_\epsilon$. Therefore, to provide a bound for the data, one can introduce a likelihood and substitute the prior lower bound into a standard variational bound that integrates out the latent.

To provide a valid lower bound for multivariate diffusions, we extend theorem 6 in \citet{song2021maximum} from univariate to multivariate diffusions. 

\begin{theorem}\label{thm:proper_elbo}
    For transition kernel $q_\phi(\rvy_s \mid \rvy_0)$, we can compute upper bound the model likelihood at time $0$ as follows, for any $\epsilon > 0$
    \begin{align}
        \log p_\theta(x) \geq \E_{q_\phi(\rvy^v_0 \mid x)} \E_{q_\phi(\rvy_\epsilon | \rvy_0)} \left[ \log \frac{p_\theta(\rvy_0 \mid \rvy_\epsilon)}{q_\phi(\rvy_\epsilon \mid \rvy_0)} 
 + \mathcal{L}_{\text{mdm}}(\rvy_\epsilon, \epsilon) - \log q_{\phi}(\rvy_0^v \mid x)  \right] , 
    \end{align}
    where $\mathcal{L}_{\text{mdm}}(\rvy_\epsilon, \epsilon)$ is defined as
    \begin{align*}
        \mathcal{L}_{\text{mdm}}(\rvy_\epsilon, \epsilon) = \E_{q_\phi(\rvy_{>\epsilon} \mid \rvy_\epsilon)} \left[ \log \pi_\theta(\rvy_T)  - \int_{\epsilon}^T \frac{1}{2} \norm{s_\phi}_{g_{\phi}}^2 - \frac{1}{2} \norm{s_\theta - s_\phi}^2_{g_\phi} + \nabla \cdot f_\phi \right]  .
    \end{align*}    
\end{theorem}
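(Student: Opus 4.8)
The plan is to treat $\rvy_\epsilon$ as a latent variable interposed between the data $\rvy_0$ and the noise endpoint $\rvy_T$, and to assemble the bound by composing three variational steps, mirroring the univariate composition in Theorem 6 of \cite{song2021maximum} but carried out over the extended space $\mathbb{R}^{dK}$. The structural fact that makes this work is that the generative model is Markov along the path, so its joint law factorizes at time $\epsilon$ as $p_\theta(\rvy_0,\rvy_{[\epsilon,T]}) = p_\theta(\rvy_0\mid\rvy_\epsilon)\,p_\theta(\rvy_{[\epsilon,T]})$. Conditioning at $\rvy_\epsilon$ thus decouples a short ``decoder'' piece $p_\theta(\rvy_0\mid\rvy_\epsilon)$ from the continuous reverse process on $[\epsilon,T]$, whose time-$\epsilon$ marginal is exactly the prior $p_\theta(\rvy_\epsilon)$ that $\mathcal{L}_{\text{mdm}}(\rvy_\epsilon,\epsilon)$ will bound.

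First I would dispatch the auxiliary variables exactly as in \cref{appsec:mdm_elbo}: write $\log p_\theta(x)=\log\int p_\theta(\rvy_0=(x,v))\,dv$, insert the variational factor $q_\phi(\rvy_0^v\mid x)$, and apply Jensen to produce the outer expectation and the $-\log q_\phi(\rvy_0^v\mid x)$ term, reducing the task to lower-bounding $\log p_\theta(\rvy_0)$ for a fixed $\rvy_0=(x,v)$. Next I would introduce $\rvy_\epsilon$ using the inference kernel $q_\phi(\rvy_\epsilon\mid\rvy_0)$ as its encoder; writing $p_\theta(\rvy_0)=\E_{q_\phi(\rvy_\epsilon\mid\rvy_0)}[p_\theta(\rvy_0,\rvy_\epsilon)/q_\phi(\rvy_\epsilon\mid\rvy_0)]$ and applying Jensen (the generic change-of-measure argument of \cref{appsec:change_of_measure}) together with the Markov factorization $p_\theta(\rvy_0,\rvy_\epsilon)=p_\theta(\rvy_0\mid\rvy_\epsilon)p_\theta(\rvy_\epsilon)$ gives
\begin{align*}
\log p_\theta(\rvy_0) \geq \E_{q_\phi(\rvy_\epsilon\mid\rvy_0)}\!\left[\log\frac{p_\theta(\rvy_0\mid\rvy_\epsilon)}{q_\phi(\rvy_\epsilon\mid\rvy_0)} + \log p_\theta(\rvy_\epsilon)\right].
\end{align*}

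It then remains to bound the prior term $\log p_\theta(\rvy_\epsilon)$, which is the third step. Here I would apply the continuous-time multivariate bound of \cref{eq:mdm_bound} --- equivalently Theorem 1 of \cite{huang2021variational} or Theorem 3 of \cite{song2021maximum} on the joint space --- to the reverse process restricted to $[\epsilon,T]$, now treating $\rvy_\epsilon$ as the starting condition rather than $\rvy_0$. This yields $\log p_\theta(\rvy_\epsilon)\geq \mathcal{L}_{\text{mdm}}(\rvy_\epsilon,\epsilon)$, with the path expectation taken under $q_\phi(\rvy_{>\epsilon}\mid\rvy_\epsilon)$ and the integral running over $[\epsilon,T]$. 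Substituting back through the earlier steps and using the tower property to merge $\E_{q_\phi(\rvy_\epsilon\mid\rvy_0)}\E_{q_\phi(\rvy_{>\epsilon}\mid\rvy_\epsilon)}$ into a single path expectation yields the claimed inequality after collecting the ($\rvy_\epsilon$-independent) $\log q_\phi(\rvy_0^v\mid x)$ term inside both expectations.

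The main obstacle is justifying this third step: one must verify that the continuous-time ELBO derivation remains valid with the integration window shifted to $[\epsilon,T]$ and with a random initial condition $\rvy_\epsilon$, so that $\mathcal{L}_{\text{mdm}}(\rvy_\epsilon,\epsilon)$ genuinely lower-bounds the \emph{marginal} $\log p_\theta(\rvy_\epsilon)$ and not $\log p_\theta(\rvy_0)$. This relies on the Markov property of the joint extended process (to legitimately condition the model path at $\rvy_\epsilon$) and on the fact that the underlying Girsanov / KL-between-path-measures argument is dimension-agnostic: since $f_\phi$ and $g_\phi$ act on the full $\mathbb{R}^{dK}$ space, the univariate derivation of \cite{song2021maximum} transfers verbatim once every object is read as a vector- or matrix-valued quantity over the joint coordinates. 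Given \cref{appsec:mdm_elbo,appsec:change_of_measure}, the two preceding Jensen steps are then routine.
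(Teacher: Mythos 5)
Your proposal is correct and follows essentially the same route as the paper's proof: insert the variational factors $q_\phi(\rvy_0^v\mid x)$ and $q_\phi(\rvy_\epsilon\mid\rvy_0)$, use the Markov factorization $p_\theta(\rvy_0,\rvy_\epsilon)=p_\theta(\rvy_0\mid\rvy_\epsilon)p_\theta(\rvy_\epsilon)$, apply Jensen, and then bound $\log p_\theta(\rvy_\epsilon)$ by the \gls{mdm} \gls{elbo} of \cref{eq:mdm_bound} restricted to $[\epsilon,T]$. The only cosmetic difference is that you perform the two Jensen steps sequentially while the paper inserts both factors in a single chain before one application of Jensen, and the paper additionally records a concrete Tweedie-based choice of $p_\theta(\rvy_0\mid\rvy_\epsilon)$, which is not needed for the inequality itself.
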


\begin{proof}
    For transition kernel $q_\phi(\rvy_s \mid \rvy_0)$, we can compute upper bound the model likelihood at time $0$ following an application of the variational bound
    \begin{align*}
        \log p_\theta(x) &= \log \int_{v_0} p_\theta(\rvy_0=[x, v_0]) dv_0 \\
        &= \log \int_{v_0, \rvy_\epsilon} p_\theta(\rvy_0, \rvy_\epsilon) dv_0 d\rvy_\epsilon \\
        &= \log \int_{v_0, \rvy_\epsilon} q_\phi(\rvy_\epsilon \mid \rvy_0) \frac{q(v_0 \mid x)}{q(v_0 \mid x)}\frac{p_\theta(\rvy_0, \rvy_\epsilon)}{q_\phi(\rvy_\epsilon \mid \rvy_0)} dv_0 d\rvy_\epsilon \\
        &= \log \int_{v_0, \rvy_\epsilon} q_\phi(\rvy_\epsilon \mid \rvy_0) \frac{q(v_0 \mid x)}{q(v_0 \mid x)}\frac{p_\theta(\rvy_0 \mid \rvy_\epsilon) p_\theta(\rvy_\epsilon)}{q_\phi(\rvy_\epsilon \mid \rvy_0)} dv_0 d\rvy_\epsilon \\ 
        & \geq \E_{q(v_0 \mid x) q_\phi(\rvy_\epsilon \mid \rvy_0)} \left[ \log \frac{p_\theta(\rvy_0 \mid \rvy_\epsilon)}{q_\phi(\rvy_\epsilon \mid \rvy_0)} - \log q_{\phi}(\rvy_0^v \mid x) + \log p_\theta(\rvy_\epsilon) \right]
    \end{align*}

   A lower bound for $\log p_\theta(\rvy_\epsilon)$ can be derived in a similar manner to \cref{eq:mdm_bound}, such that
    \begin{align*}
       \log p_\theta(\rvy_\epsilon) \geq \mathcal{L}_{\text{mdm}}(\rvy_\epsilon, \epsilon) = \E_{q_\phi(\rvy_{>\epsilon} \mid \rvy_\epsilon)} \left[ \log \pi_\theta(\rvy_T)  - \int_{\epsilon}^T \frac{1}{2} \norm{s_\phi}_{g_{\phi}}^2 - \frac{1}{2} \norm{s_\theta - s_\phi}^2_{g_\phi} + \nabla \cdot f_\phi \right] .
    \end{align*}
    The choice of $p_\theta(\rvy_0 \mid \rvy_\epsilon)$ is arbitrary, however following \citet{sohl2015deep, song2021maximum} we let $p_\theta(\rvy_0 \mid \rvy_\epsilon)$ be Gaussian with mean $\mu_{p_\theta,\epsilon}$ and covariance $\Sigma_{p_\theta,\epsilon}$. Suppose $q_\phi(\rvy_\epsilon \mid \rvy_0) = \gN(\rvy_\epsilon \mid 
    \rmA \rvy_0, \Sigma)$, then we select the following mean $\mu_{p_\theta,\epsilon}$ and covariance $\Sigma_{p_\theta,\epsilon}$ for $p_\theta(\rvy_0 \mid \rvy_\epsilon)$
    \begin{align*}
        \mu_{p_\theta,\epsilon} &= \rmA^{-1} \Sigma s_\theta(\rvy_\epsilon, \epsilon) + \rmA^{-1} \rvy_\epsilon \\
        \Sigma_{p_\theta,\epsilon} &= \rmA^{-1} \Sigma \rmA^{-\top} 
    \end{align*}
    where $\mu_{p_\theta,\epsilon}, \Sigma_{p_\theta,\epsilon}$ are derived using Tweedie's formula \citep{efron2011tweedie} by setting $\mu_\epsilon = \E[\rvy_0 \mid \rvy_\epsilon]$ and $\Sigma_\epsilon = \var{\rvy_0 \mid \rvy_\epsilon}$. 
\end{proof}
We next derive this choice as an approximation of the optimal Gaussian likelihood.

\subsection{Likelihood derivation}

Suppose  $\rvy_0 \sim q_0(\rvy_0)$ and $\rvy_\epsilon \sim \gN(\rvy_\epsilon \mid A \rvy_0, \Sigma)$. Here, $A, \Sigma$ are the mean coefficient and covariance derived from the transition kernel at time $\epsilon$. We use Tweedie's formula to get
the mean and covariance of $\mby_0$ given $\mby_\epsilon$ under $q$.
This mean and covariance feature the true
score $\nabla_{\mby_\epsilon} \log q(\mby_{\epsilon})$.
We replace the score with the score model
$s_\theta$ and then set $p_\theta(\mby_0|\mby_\epsilon)$ to have 
the resulting approximate mean and covariance.
We make this choice because the optimal $p_\theta(\mby_0 | \mby_{\epsilon})$
equals the true
$q(\mby_0|\mby_{\epsilon})$ as discussed throughout the work.

Here $\rvy_0 = [\rvx_0, \rvv_0]$ where $\rvx_0 \sim q_{\text{data}}$.

Let $\eta$ be the natural parameter for the multivariate Gaussian likelihood $\gN(\rvy_\epsilon \mid A \rvy_0, \Sigma)$. Then, Tweedie's formula \citep{efron2011tweedie} states that:
\begin{align*}
    \E [\eta \mid \rvu_{\epsilon}] = \nabla_{\rvy_\epsilon} l(\rvy_\epsilon) - \nabla_{\rvy_{\epsilon}} l_0(\rvy_{\epsilon})
\end{align*}
\begin{itemize}

 \item 
   $ l(\rvy_\epsilon) = \log q(\rvy_\epsilon)$
 
    \item $s_\theta(\mby_\epsilon,\epsilon)$ is taken to be the true score $\nabla_{\mby_{\epsilon}} \log q(\mby_\epsilon)$ so that 
    $\nabla_{\rvy_\epsilon} l(\rvy_\epsilon) =s_{\theta}(\rvy_\epsilon, \epsilon)$

    \item $l_0$ is the log of the base distribution defined in the exponential family parameterization. 
\end{itemize}

The base distribution is a multivariate Gaussian with mean $0$ and covariance $\Sigma$, therefore $\nabla_{\rvy_{\epsilon}} l_0 (\rvy_\epsilon) = -\Sigma^{-1}\rvy_{\epsilon}$, 
\begin{align*}
    \E [\eta \mid \rvy_{\epsilon}] =s_{\theta}(\rvy_\epsilon, \epsilon)  + 
    \Sigma^{-1} \rvy_{\epsilon}.
\end{align*}
However, Tweedie's formula is not directly applicable 
since our $\mby_\epsilon$ is not directly normal with mean $\mby_0$. Instead, to derive the conditional mean of $\rvy_0$ given $\rvy_\epsilon$, we use the relation $\eta = \Sigma^{-1}\rmA \rvy_0$ and the linearity of conditional expectation to get
\begin{align*}
    \E [\rvy_0 \mid \rvy_{\epsilon}] &= \E[A^{-1}\Sigma \eta | \rvy_{\epsilon}] \\
    &= A^{-1}\Sigma \E [\eta \mid \rvy_{\epsilon}] \\
    &= A^{-1}\Sigma \left(s_{\theta}(\rvy_\epsilon, \epsilon)  + 
    \Sigma^{-1} \rvy_{\epsilon} \right) \\
    &= A^{-1}\Sigma s_{\theta}(\rvy_\epsilon, \epsilon) + A^{-1} \rvy_{\epsilon} .
\end{align*}

For the variance, we use the following relation $\rvy_\epsilon = A\rvy_0 + \sqrt{\Sigma} \epsilon$, which implies that 
\begin{align*}
    \rvy_0 &= A^{-1} \rvy_\epsilon - A^{-1} \sqrt{\Sigma} \epsilon \\
    \var{\rvy_0 \mid \rvy_{\epsilon}} &= A^{-1} \Sigma A^{-T} .
\end{align*}

Therefore, for the model posterior distribution $p_\theta(\rvy_0 \mid \rvy_\epsilon)$ we choose a Normal with mean and covariance
\begin{align*}
    \mu_{p_\theta, \epsilon} &= A^{-1}\Sigma s_{\theta}(\rvy_\epsilon, \epsilon) + A^{-1} \rvy_{\epsilon} \\
    \Sigma_{p_\theta,\epsilon} &= A^{-1} \Sigma A^{-T} 
\end{align*}
\end{document}